\newif\ifdraft
\newif\ifoc 
\newif\ifarxiv 
\newcommandx{\nt}[2][1=]{\todo[linecolor=red,
			backgroundcolor=red!10,bordercolor=red,#1]{#2}}
\newcommandx{\jy}[2][1=]{\todo[linecolor=green,
			backgroundcolor=green!10,bordercolor=green,#1]{JY: #2}}
\newcommandx{\sw}[2][1=]{\todo[linecolor=blue,
			backgroundcolor=blue!10,bordercolor=blue,#1]{SW: #2}}
\newcommand{\nt}[1]{{}}
\newcommand{\jy}[1]{{}}
\newcommand{\sw}[1]{{}}
\newif\iftwocolumn
\newtheorem{problem}{Problem}[section]
\newtheorem{proposition}{Proposition}[section]
\newtheorem{lemma}{Lemma}[section]
\newtheorem{theorem}{Theorem}[section]
\theoremstyle{definition}
\theoremstyle{remark}
\newtheorem*{remark}{Remark}
\def\subsubsection{\@startsection{subsubsection}
                                 {3}
                                 {\z@ \hspace*{1mm}}
                                 {0ex plus 0.1ex minus 0.1ex}
                                 {0ex}
                                 {\normalfont\normalsize\itshape}}
\def\W{\mathcal W}
\def\osgt{\textsc{OSG${}_{\mathrm{2D}}$}\xspace}
\def\opg{\textsc{OPG}\xspace}
\def\opgt{\textsc{OPG${}_{\mathrm{2D}}$}\xspace}
\def\orgt{\textsc{{ORG}${}_{\mathrm{2D}}$}\xspace}
\title{
Optimally Guarding Perimeters and Regions with Mobile Range Sensors
}
\author{
Si Wei Feng  \and Jingjin Yu
 \thanks{
 S.-W. Feng and J. Yu are with the Department of Computer Science, 
 Rutgers, the State University of New Jersey, Piscataway, NJ, 
 USA. E-Mails: \{{\tt siwei.feng, jingjin.yu}\}\hspace*{.25em}
 \MVAt \hspace*{.25em}rutgers.edu. This work is supported by NSF 
 awards IIS-1734419 and IIS-1845888. 
}
}
\begin{document}

\maketitle
\thispagestyle{empty}
\pagestyle{empty}

\ifdraft
\begin{picture}(0,0)%
\put(-12,105){
\framebox(505,40){\parbox{\dimexpr2\linewidth+\fboxsep-\fboxrule}{
\textcolor{blue}{
The file is formatted to look identical to the final compiled IEEE 
conference PDF, with additional margins added for making margin 
notes. Use $\backslash$todo$\{$...$\}$ for general side comments
and $\backslash$jy$\{$...$\}$ for JJ's comments. Set 
$\backslash$drafttrue to $\backslash$draftfalse to remove the 
formatting. 
}}}}
\end{picture}
\vspace*{-4mm}
\fi


\begin{abstract}
We investigate the problem of using mobile robots equipped with 2D range 
sensors to optimally guard perimeters or regions.
Given a bounded set in $\mathbb{R}^2$ to be guarded, and $k$ mobile sensors 
where the $i$-th sensor can cover a circular region with a variable radius 
$r_i$, we seek the optimal strategy to deploy the $k$ sensors to fully 
cover the set such that $\max r_i$ is minimized. 
On the side of computational complexity, we show that computing a 
$1.152$-optimal solution for guarding a perimeter or a region is NP-hard
even when the set is a simple polygon or the boundary of a simple polygon, 
i.e., the problem is hard to approximate.
The hardness result on perimeter guarding holds when each sensor 
may guard at most two disjoint perimeter segments. 
%
On the side of computational methods, for the guarding perimeters, 
we develop a fully polynomial time approximation scheme (FPTAS) for the 
special setting where each sensor may only guard a single continuous 
perimeter segment, suggesting that the aforementioned hard-to-approximate 
result on the two-disjoint-segment sensing model is tight. 
For the general problem, we first describe a polynomial-time 
$(2+\varepsilon)$-approximation algorithm as an upper bound, applicable to 
both perimeter guarding and region guarding. 
This is followed by a high-performance integer linear programming (ILP) 
based method that computes near-optimal solutions. 
Thorough computational benchmarks as well as evaluation on potential 
application scenarios demonstrate the effectiveness of these algorithmic 
solutions. 
\end{abstract}

\section{Introduction}\label{sec:intro}
In this paper, we consider the problem of using mobile robots equipped 
with range sensors to guard (1D) perimeters or (2D) regions. Given 
a bounded polygonal one- or two-dimensional set to be secured, 
and $k$ mobile robots where robot $i$'s sensor covers a circular region of 
radius $r_i$, we seek a deployment of the robots so that $\max r_i$ is 
minimized. That is, we would like to minimize the maximum single-sensor 
coverage across all sensors. We denote this multi-sensor coverage problem 
under the umbrella term {\em optimal set guarding with 2D sensors}, or 
\osgt.\footnote{The subscript is placed here to distinguish our setup 
from the \opg problem studied in \cite{FenHanGaoYuRSS19}, which assumes 
a 1D sensing model.} The specific problem for guarding
perimeters (resp., regions) is denoted as {\em optimal perimeter 
(resp., region) guarding with 2D sensors}, abbreviated as \opgt (resp., 
\orgt). Beside direct relevance to sensing, surveillance, and monitoring 
applications using mobile sensors \cite{batalin2002spreading,
cortes2004coverage,FenHanGaoYuRSS19}, \osgt applies to other robotics
related problem domains, e.g., the deployment of ad-hoc mobile wireless 
networks \cite{correll2009ad,gil2012communication}, in which case an 
optimal solution to \osgt provides a lower bound on the guaranteed network 
strength over the targeted 2D region. 

\begin{figure}[ht]
    \centering
		\vspace*{2mm}
    \includegraphics[width=\columnwidth]{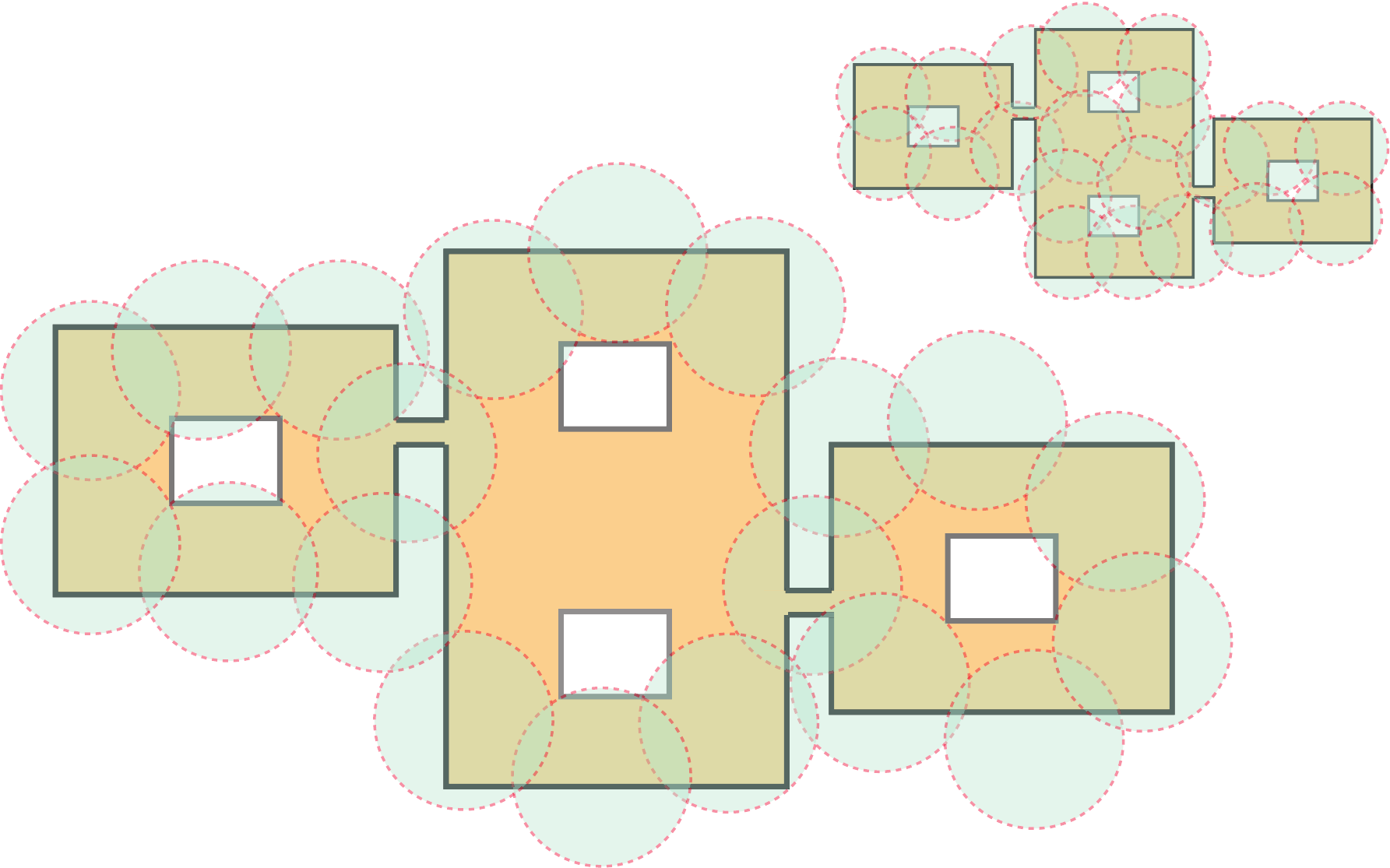}
		\vspace*{1mm}
    \caption{An illustration of the \osgt setup and sample solutions.
		[center] The background shows the footprint of a building, e.g., an 
		apartment  complex. Scenarios may arise that a dangerous criminal 
		might be hiding in the building and we would like to closely monitor 
		the outer boundary of the building.	For the setting, the shaded 
		discs provide a	near-optimal cover with minimum radii for $20$ 
		mobile sensors that fully encloses the outer perimeter, computed 
		using algorithms presented in this work with optimality guarantees. 
		[upper right] A near-optimal solution for guarding the interior of
		the building footprint minus the four holes.}
    \label{fig:example0}
\end{figure}
As a summary of the study, on the side of computational complexity, we 
establish that \opgt is hard to approximate within a factor of 
$1.152$
even when the perimeter is a simple closed polygonal chain whose length
is bounded by the input size, through a reduction from 
vertex cover on planar $3$-regular graphs.
A unique property of our reduction is that it shows the inapproximability gap
remains when each sensor can cover at most two disjoint perimeter 
segments. 
The proof also shows that \orgt is at least as hard to approximate. 
Therefore, no polynomial time algorithm may exist that solves \osgt to 
better than the $1.152$-optimal lower bound, unless P$\,=\,$NP. 
On the algorithmic side, we begin by providing an efficient $(1+\varepsilon)$
approximation algorithm
for a specific class of \opgt problems in which each mobile 
sensor must cover a continuous perimeter segment. This implies that the 
aforementioned inapproximability result on \opgt under the 
two-disjoint-segment sensing model is tight. 
For the general \osgt problem, we first describe a polynomial time 
$(2+\varepsilon)$ approximation algorithm as a reasonable approximability 
upper bound. 
Then, an integer linear programming (ILP) model is devised that allows 
the fast computation of highly optimal solutions for fairly large 
problem instances. 
%
Results described in this paragraph, together with the introduction 
of \osgt as a practical multi-robot deployment problem focusing on global 
optimality, constitute the main contributions of this work. 

As an intermediate result toward showing the hardness of the simple polygon
coverage problem, we also supply a hardness proof of vertex cover 
on planar bridgeless\footnote{That is, the deletion of any edge does not disconnect 
the graph.} $3$-regular graphs, which may be of independent interest. 

\textbf{Related work}. Our work on optimal perimeter and region guarding 
draws inspiration from a long line of multi-robot coverage planning and 
control research, e.g., \cite{cortes2004coverage,martinez2007motion,
schwager2009optimal,pavone2009equitable,schwager2009decentralized,
pierson2017adapting}. 
In an influential body of work on coverage control \cite{cortes2004coverage,
martinez2007motion}, a gradient based iterative method is shown to drive 
one or multiple mobile sensors to a locally optimal configuration with 
convergence guarantees. 
Whereas \cite{cortes2004coverage,martinez2007motion} assume that the 
distribution of sensory information is available {\em a priori}, it is 
shown that such information can be effectively learned 
\cite{schwager2009decentralized}. 
Subsequently, the control method is further extended to allow the 
coverage of non-convex and disjoint 2D domains \cite{schwager2009optimal} 
and to work for mobile robots with varying sensing or actuation capabilities
\cite{pierson2017adapting}. 
In contrast to these control-based approaches, which produce iterative 
locally optimal solutions, \osgt emphasizes the direct computation of 
globally optimal deployment solutions and supports arbitrarily shaped
bounded (1D) perimeters and (2D) regions.

Recently, the problems of globally optimally covering perimeters using 
one-dimensional sensors have been studied in much detail 
\cite{FenHanGaoYuRSS19,FenYu2020RAL}. It is shown that when the sensors 
are homogeneous, the optimal deployment of sensors can be computed 
very efficiently, even for highly complex perimeters \cite{FenHanGaoYuRSS19}.
On the other hand, the problem becomes immediately intractable, sometimes
strongly NP-hard, when sensors are heterogeneous \cite{FenYu2020RAL}. 
Our research is distinct from \cite{FenHanGaoYuRSS19,FenYu2020RAL} in that 
we employ a (two-dimensional) range sensing model and work on the coverage 
of both perimeters and regions, which has much broader applicability. 

As pointed out in \cite{cortes2004coverage,schwager2009decentralized}, 
distributed sensor coverage, as well as \osgt, has roots in the study 
of the facility location optimization problem 
\cite{weber1929theory,drezner1995facility}, which examines the selection 
of facility (e.g., warehouses) locations that minimize the cost of delivery 
of supplies to spatially distributed customers. In theoretical computer 
science and operations research, these are known as the $k$-center, 
$k$-means, and $k$-median clustering problems \cite{har2011geometric}, 
the differences among which are induced by the cost structure. Our 
investigation of \osgt benefits from the vast literature on 
the study of $k$-center clustering and related problems, e.g., 
\cite{feder1988optimal,hochbaum1985best,gonzalez1985clustering,daskin2000new,shamos1975closest}.
These clustering problems are in turn related to packing 
\cite{hales2005proof}, tiling \cite{thue1910dichteste}, and the 
well-studied art gallery problems \cite{o1987art,shermer1992recent}.

\textbf{Organization}. The rest of the paper is organized as follows. 
In Section~\ref{sec:problem}, we introduce the \osgt formulation. 
Section~\ref{sec:complexity} is devoted to establishing that \osgt is 
hard to approximate to better than $1.152$-optimal, providing a theoretical
lower bound. In Section~\ref{sec:algo}, focusing on the upper bound, 
we describe algorithms that for \osgt and the special \opgt variant 
where a sensor is allowed to cover a continuous perimeter segment.
In Section~\ref{sec:expr}, we benchmark the algorithms and illustrate two 
potential applications. We discuss and conclude the work in Section~\ref{sec:conc}.

\jy{Q: in the perimeter guarding problem, what if the perimeters are paths and the goal is to just touch all paths at least at one point? Maybe something interesting to explore.} 

\jy{Q: what about non-circular coverage region?}

\jy{Q: we may explore the setting that restricts the sensor deployment location. E.g., 
we may only allow sensors to be on the boundary, or within some distance away from 
the boundary, and so on.}

\section{Preliminaries}\label{sec:problem}
 Let $\mathcal{W}\subset \mathbb{R}^2$ be a 
polygonal workspace, which may contain one or multiple connected components. 
A {\em critical subset} of $\mathcal{W}$ needs to be guarded by $k$ 
indistinguishable point guards with range sensing capabilities. For 
example, the workspace may be a forest reserve and the critical subset 
may be its boundary. Or, the workspace may be a high-security facility, 
e.g., a prison, and the critical subset the prison yard. The $i$th 
guard, $1\le i\le k$, located at $c_i \in \mathbb{R}^2$, can monitor a circular 
area of radius $r_i$ centered at $c_i$ with $r_i$ being a variable. For 
example, the guard may be a watchtower equipped with a vision sensor 
that can detect intruders. As the watchtower's altitude increases, 
its sensing range also increases; but its monitoring quality will 
decrease at the same time due to resolution loss. In this study, we seek 
to compute the optimal strategy to deploy these $k$ guards so that the 
required sensing range, $\max_i r_i$, could be minimized. 

More formally, we model a connected component of $\W$ as some 2D polygonal region
containing zero or more simple polygonal obstacles. 
For a bounded set $D \subset \mathbb{R}^2$, we define
\begin{align*}
size(k, D) = \min_{c_1, \dots, c_k\in \mathbb{R}^2}\ \max_{p\in D}\ \min_{1\leq i\leq k} \|c_i - p\|_2  
\end{align*}
and use $B(c, r)$ to denote the disc of radius $r$ centered at a point $c 
\in \mathbb{R}^2$ (the definition of $size(k, D)$ is used extensively in later 
sections). 
Intuitively, $size(k,D)$ represents the minimum radius needed such that there
exisits $k$ circles with radius $size(k, D)$ that can cover the 2D bounded region $D$ entirely.
The main problem studied in this work is:

\begin{problem}[Optimal Set Guarding with 2D Sensors]
Given a polygonal workspace $\W \subset \mathbb{R}^2$, let $D \subset \W$ be a 
critical subset to be guarded by $k$ robots each with a variable coverage radius of $r$. Find 
the smallest $r$ and corresponding robot locations $c_1, \dots, c_k\in \mathbb{R}^2$, such that $D \subset \cup_i B(c_i, r)$.
\end{problem}

For making accurate statements about computational complexity, we make
the assumption that the length of $\partial \W$ is bounded by a polynomial with respect 
to the complexity of $\W$, (i.e. the number of vertices of the polygon). 

For convenience, we give specific names to these optimal guarding 
problems based critical subset types. If the critical 
subset belongs to $\partial \W$, we denote the problem as {\em optimal 
perimeter guarding with 2D sensors} or \opgt. If the critical subset 
is $\W$, we denote the problem as {\em optimal region guarding with 
2D sensors} or \orgt. When there is no need to distinguish, the problem 
is denoted as {\em optimal set guarding with 2D sensors} (\osgt). 

As an example, to guard the boundary of a plus-shaped polygon with $5$ 
robots, an optimal solution could be Fig.~\ref{fig:example} where the 
inner circle covers $4$ disconnected boundary segments, such pattern 
in the optimal solution also renders \opgt much more difficult than 
the simplified 1D sensing model studied in \cite{FenHanGaoYuRSS19} 
(indeed, \opgt becomes hard to approximate, as will be shown shortly). 
The solution is also optimal under the \orgt formulation.
\begin{figure}[ht]
    \centering
		\vspace*{3mm}
    \includegraphics[scale=0.35]{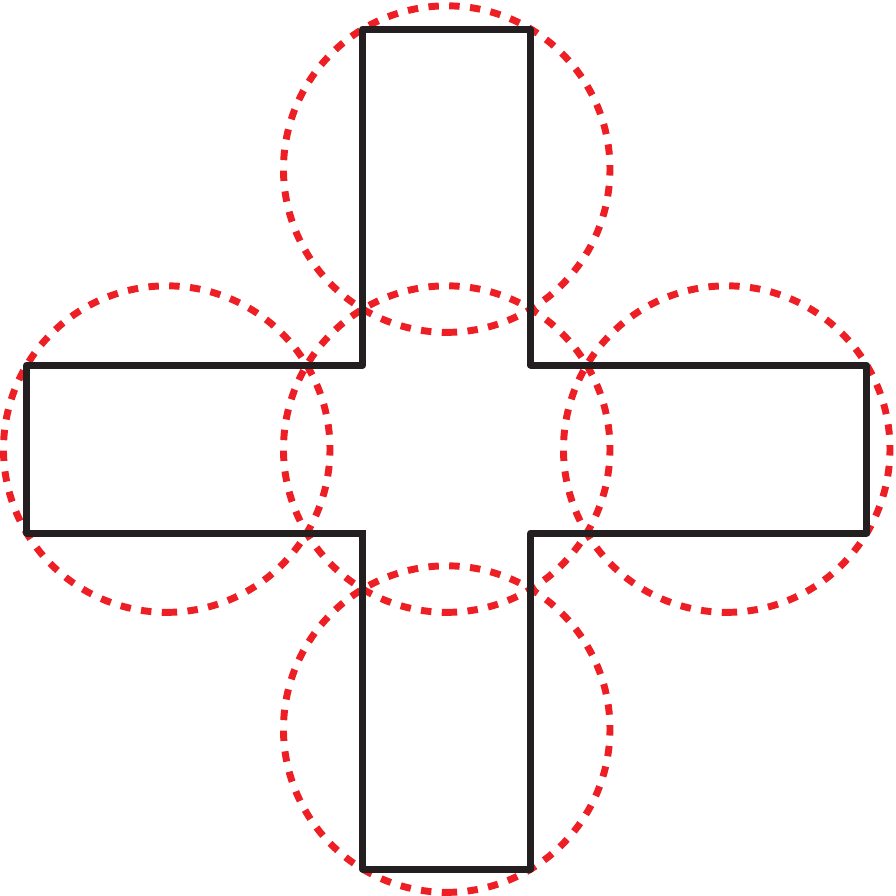}
		\vspace*{1.5mm}
    \caption{An example showing an optimal solution of using five discs
		to cover the plus-shaped polygon. The solution is optimal for both 
		\opgt and \orgt formulations.}
    \label{fig:example}
\end{figure}

\section{Intractability of Approximate Optimal Guarding of Simple Polygon}\label{sec:complexity}
In this section, we prove that \osgt with the set being a simple polygon is 
strongly NP-hard to approximate within a factor of $\alpha \approx 1.152$, through 
a sequence of auxiliary NP-hardness results. 
First, in Section~\ref{subsec:3regular}, we prove an intermediate result 
that the vertex cover problem is NP-complete on planar bridgeless 
$3$-regular graphs.
Next, in Section~\ref{complexity:3netcomp}, starting from a planar bridgeless 
$3$-regular graph, we construct a structure which we call {\em $3$-net} and 
prove the the problem of finding the minimum coverage radius of the 
$3$-net is NP-hard to approximate within $\alpha$. 
Then, in Section~\ref{subsec:osgthard}, we apply a straightforward 
reduction to transform the $3$-net into a simple polygon to complete
the hard-to-approximate proof for \osgt for a simple polygon.

We then further show the inapproximability of the special \opgt setup 
when each robot can only guard at most two disjoint perimeter segments 
(Section~\ref{subsec:2-seghard}), contrasting the FPTAS for the special 
\opgt setup when each robot can only guard a continuous perimeter 
segment in Section \ref{subsec:singleseg}.

\subsection{Vertex Cover on Planar Bridgeless $3$-Regular Graph}


Our reduction uses the hardness result on the vertex cover problem for planar 
graphs with maximum degree $3$ \cite{garey1977rectilinear}. Such a vertex cover 
problem can be fully specified with a 2-tuple $(G, k)$ where $G = (V, E)$ is a 
planar graph with max degree $3$ and $k$ is an integer specifying the allowed 
number of vertices in a vertex cover. We note that the result has been 
suggested implicitly in \cite{mohar2001face}; we provide an explicit account 
with a simple proof. 

\begin{lemma}
Vertex cover on planar bridgeless $3$-regular graph is NP-complete.
\end{lemma}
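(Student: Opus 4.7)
The plan is to prove NP-completeness in the standard two steps. Membership in NP is immediate: given a candidate subset $C \subseteq V$, one can check in polynomial time that $|C| \le k$ and that every edge is incident to some vertex in $C$. So the real work is the hardness direction.

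For hardness I would reduce from vertex cover on planar graphs of maximum degree $3$, which is NP-complete by \cite{garey1977rectilinear}. Given an input $(G, k)$ of that problem, the plan is to build, in polynomial time, a planar bridgeless $3$-regular graph $G'$ together with an integer $k'$ such that $G$ has a vertex cover of size $\le k$ if and only if $G'$ has a vertex cover of size $\le k'$. The transformation proceeds via small local planar gadgets that (i) raise each vertex of degree $1$ or $2$ up to degree exactly $3$, and (ii) destroy every bridge by ensuring that each remaining edge lies on a cycle.

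Concretely, I would design two planar gadgets $H_1$ and $H_2$ (for degree-$1$ and degree-$2$ host vertices, respectively), each of which is $3$-regular at all but a few designated port vertices and has a minimum vertex cover that is known up to an additive constant. A convenient building block is $K_4$ with one edge subdivided: it is planar, bridgeless in isolation, and exposes a port of internal degree $2$. To prevent the port-attachment edge from itself becoming a new bridge, I would use two-port versions of the gadgets that attach to the host graph along two independent paths, forcing every inserted edge to sit on a cycle. For any bridge still present in the intermediate graph, I would subdivide it and splice in a small $2$-edge-connected planar gadget on the subdivision points, replacing the bridge by a cycle while keeping all degrees at $3$.

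Putting this together, planarity of $G'$ follows because each gadget is planar and is inserted into a single face of the current plane embedding; $3$-regularity holds by construction; and bridgelessness follows since every edge of $G'$ lies in a cycle contributed either by a gadget or by the modified original graph. The delicate part is the vertex-cover accounting: each gadget must contribute a fixed increment $c_i$ to the minimum cover irrespective of whether its host vertex is chosen, so that $\mathrm{VC}(G') = \mathrm{VC}(G) + \sum_i c_i$ with $\sum_i c_i$ polynomially computable. This decoupling of the gadget's internal cover from the rest of the graph is the main obstacle, and it is precisely what dictates the choice of $K_4$-based gadgets whose optimal internal covers can be analyzed exhaustively. Once the additive offset $k' := k + \sum_i c_i$ is established, the reduction gives the claimed NP-completeness.
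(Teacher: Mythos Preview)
Your plan follows the same two-stage strategy as the paper: starting from planar max-degree-$3$ vertex cover, first attach pendant gadgets to bring every vertex to degree $3$, then eliminate bridges, with each gadget contributing a fixed additive constant to the minimum cover independent of whether the host vertex is selected. The paper's degree-raising gadget is close in spirit to your $K_4$-with-a-subdivided-edge idea (it contributes exactly $3$ to the cover either way), and the paper makes no attempt to avoid introducing bridges at this stage---it simply sweeps up all bridges, old and new, in the second pass. So your extra effort with ``two-port'' degree gadgets is unnecessary, and in the degree-$2$ case it is not even clear it can be made to work, since that vertex must receive exactly one new incident edge.

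The one place your sketch would actually fail as written is the bridge-removal step. Subdividing a bridge $uv$ and hanging a $2$-edge-connected gadget off the subdivision vertex (or vertices) does not destroy the bridge: if $uv$ separated the graph into $G_1 \ni u$ and $G_2 \ni v$, then after subdivision the edge from $u$ to the first subdivision vertex is still a cut edge, since every path from $G_1$ to $G_2$ must traverse it. What is needed is a gadget that replaces the single $u$--$v$ connection by (at least) two edge-disjoint connections across the cut while keeping all degrees equal to $3$. The paper accomplishes this by replacing each endpoint $P,Q$ of the bridge with a small diamond-shaped block that exposes two new ports on each side, and then joining the two sides by a \emph{pair} of edges $P'Q'$ and $P''Q''$; this adds exactly $6$ to the minimum cover per bridge and introduces no new bridges. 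With that correction your argument goes through and matches the paper's.
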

\begin{proof}
For a given planar graph $G$ with max degree $3$ and an integer $k$,
we construct a planar bridgeless $3$-regular graph $G''$ and provide an 
integer $k''$ such that $G$ has a vertex cover of size $k$ if and only 
if $G''$ has a vertex cover of size $k''$.

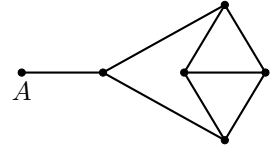
\begin{figure}[ht]
    \vspace*{2mm}
    \centering
    \begin{tikzpicture}[scale = .9]
        \draw[black, thick] (-0.8, 0) -- (-2,0);
        \draw[black, thick] (-0.8, 0) -- (1,1);
        \draw[black, thick] (0.4, 0) -- (1,1);
        \draw[black, thick] (1.6, 0) -- (1,1);
        \draw[black, thick] (-0.8, 0) -- (1,-1);
        \draw[black, thick] (0.4, 0) -- (1,-1);
        \draw[black, thick] (1.6, 0) -- (1,-1);
        \draw[black, thick] (0.4, 0) -- (1.6,0);
        \filldraw[black] (-0.8,0) circle (1.5pt);
        \filldraw[black] (0.4,0) circle (1.5pt);
        \filldraw[black] (1.6,0) circle (1.5pt);
        \filldraw[black] (-2,0) circle (1.5pt) node[anchor = north] {$A$};
        \filldraw[black] (1,-1) circle (1.5pt);
        \filldraw[black] (1,1) circle (1.5pt);
    \end{tikzpicture}
		\vspace*{2mm}
    \caption{A gadget that can be attached to a degree one or two vertices
		(at the point $A$) in a max degree $3$ graph to make all vertices have
		degree $3$. With each addition of the gadget, we increase the vertex 
		cover by a size of $3$, regardless of whether $A$ is part of a vertex 
		cover.}
    \label{fig:structure-hook}
\end{figure}

The reduction first makes $G$ $3$-regular by attaching (one or two of) the 
gadget shown in Fig.~\ref{fig:structure-hook} to $v \in G$ that are not 
degree $3$. This results in a $3$-regular graph $G'$. For each attached 
gadget, $k$ is bumped up by $3$, i.e., we let $k'$ for $G'$ be $k' = k 
+ 3(3|V(G)| - 2|E(G)|)$. It is straightforward to see that $G$ has a vertex 
cover size of $k$ if and only if $G'$ has a vertex cover size of $k'$.

\begin{figure}[!ht]
    \centering
    \raisebox{10mm}
    {\begin{tikzpicture}
        \draw[black, thick] (-1,0) -- (1,0);
        \draw[black, thick] (-1,0) -- (-1.3,0.6);
        \draw[black, thick] (-1,0) -- (-1.5,-0.6);
        \draw[black, thick] (1,0) -- (1.3,0.6);
        \draw[black, thick] (1,0) -- (1.5,-0.6);
        \draw[black, thick, -latex] (2.3, 0) -> (2.8,0);
        \filldraw[black] (-1.5,0) node[anchor=south] {$G_1'$};
        \filldraw[black] (1.5,0) node[anchor=south] {$G_2'$};
        \filldraw[black] (-1,0) circle (1.5pt) node[anchor=north] {$P$};
        \filldraw[black] (1,0) circle (1.5pt) node[anchor=north] {$Q\,\,$};
    \end{tikzpicture}
    }
    \hfill
    \begin{tikzpicture}
        \coordinate(p1) at (-1.5, 0);
        \coordinate(p2) at (-0.5, 0);
        \coordinate(p3) at (-1, 0.5);
        \coordinate(p4) at (-1, -0.5);
        
        \coordinate(p5) at (-1, 1);
        \coordinate(p6) at (-1, -1);
        
        \coordinate(p7) at (-1.5, 1.4);
        \coordinate(p8) at (-1.5, -1.4);

        \coordinate(q1) at (0.5, 0);
        \coordinate(q2) at (1.5, 0);
        \coordinate(q3) at (1, 0.5);
        \coordinate(q4) at (1, -0.5);

        \coordinate(q5) at (1, 1);
        \coordinate(q6) at (1, -1);
        \coordinate(q7) at (1.5, 1.4);
        \coordinate(q8) at (1.5, -1.4);
        
        \draw[black, thick] (p1) -- (p2);
        \draw[black, thick] (p1) -- (p3);
        \draw[black, thick] (p1) -- (p4);        
        \draw[black, thick] (p2) -- (p3);
        \draw[black, thick] (p2) -- (p4);
        \draw[black, thick] (p6) -- (p4);
        \draw[black, thick] (p5) -- (p3);

        \draw[black, thick] (q1) -- (q2);
        \draw[black, thick] (q1) -- (q3);
        \draw[black, thick] (q1) -- (q4);        
        \draw[black, thick] (q2) -- (q3);
        \draw[black, thick] (q2) -- (q4);
        
        \draw[black, thick] (q6) -- (q4);
        \draw[black, thick] (q5) -- (q3);
        
        \draw[black, thick] (q5) -- (p5);
        \draw[black, thick] (q6) -- (p6);
        
        \filldraw[black] (p1) circle (1.5pt);
        \filldraw[black] (p2) circle (1.5pt);
        \filldraw[black] (p3) circle (1.5pt);
        \filldraw[black] (p4) circle (1.5pt);
        \filldraw[black] (p5) circle (1.5pt) node[anchor=south] {$P'$};
        \filldraw[black] (p6) circle (1.5pt) node[anchor=north] {$P''$};
        
        \filldraw[black] (q1) circle (1.5pt);
        \filldraw[black] (q2) circle (1.5pt);
        \filldraw[black] (q3) circle (1.5pt);
        \filldraw[black] (q4) circle (1.5pt);
        \filldraw[black] (q5) circle (1.5pt) node[anchor=south] {$Q'$};
        \filldraw[black] (q6) circle (1.5pt) node[anchor=north] {$Q''$};
        
        \draw[black, thick] (p5) -- (p7);
        \draw[black, thick] (p6) -- (p8);
        \draw[black, thick] (q5) -- (q7);
        \draw[black, thick] (q6) -- (q8);
        
    \end{tikzpicture}
    \vspace{.5mm}
    \caption{Transformation that removes bridge $PQ$ and does not introduce new bridges.
    The minimum vertex cover number is increased by $6$ after each transformation.}
    \label{fig:rmbridge}
\end{figure}
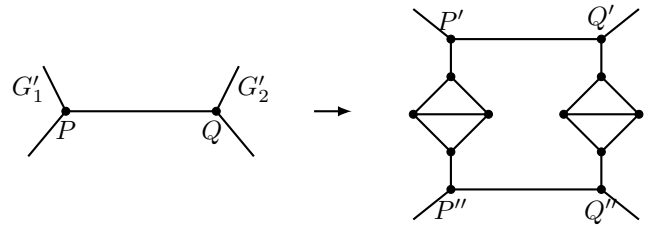

In the second and last step, we remove bridges in $G'$. As in 
Fig.~\ref{fig:rmbridge}, for a bridge $PQ$ that divides $G'$ into $G_1'$ 
(containing $P$) and $G_2'$ (containing $Q$), we split the bridge edge 
$PQ$ using the illustrated transformation, which yields a new graph $G''$
that is planar, bridgeless, and $3$-regular, after all bridges are removed
this way. For each such augmentation, the size of the vertex cover is 
bumped up by $6$. Let $br(G')$ be the number of bridges in $G'$, $G'$ has 
a vertex cover of size  $k'$ if and only if $G''$ has a vertex cover 
of size $k'' = k'+6br(G')$. This completes the proof. 
\end{proof}

\label{subsec:3regular}
\subsection{Hardness on Optimally Guarding A $3$-Net}\label{complexity:3netcomp}
Starting from a planar cubic graph $G$, we construct a structure that we call 
$3$-net, $T_G$, as follows. 
First, similar to \cite{feder1988optimal}, to embed $G$ into the plane, 
an edge $uw \in E(G)$ is converted to an odd length path $uv_1, v_1v_2, 
\ldots, v_{2m}w$ where $m > 3$ is an integer. We note that $m$ is different
in general for different edges of $G$. 
Denote such a path as $u\cdots w$; each edge along $u\cdots w$ is straight 
and has unit edge length. We also require that each path is nearly straight 
locally. 
%
For a vertex of $G$ with degree $3$, e.g., a vertex $u \in V(G)$ 
neighboring $w, x, y \in V(G)$, we choose proper configurations and lengths for 
paths, $u\cdots w$, $u\cdots x$, and $u\cdots y$ such that
these paths meet at $u$ forming pairwise angles of $2\pi/3$. We denote the 
resulting graph as $G'$, which becomes the {\em backbone} of the 
$3$-net $T_G$. 



From here, a second modification is made which completes the 
construction of $T_G$. In each previously constructed 
path $u\cdots w = uv_1\ldots v_{2m}w$, for each $v_iv_{i+1}$, $1 \le i
\le 2m-1$, we add a line segment of length $\sqrt{3}$ that is 
perpendicular to $v_iv_{i+1}$ such that $v_iv_{i+1}$ and the line 
segment divide each other in the middle. A graphical illustration is
given in Fig.~\ref{fig:path-bar}. 
$G'$ and the bars form the 
$3$-net, which we denote as $T_G$. An example of transforming $K_4$
into a {\em 3-net} is given in Fig.~\ref{fig:3-net}.

\begin{figure}[ht]
    \centering
		\vspace*{1mm}
    \includegraphics[scale=0.3]{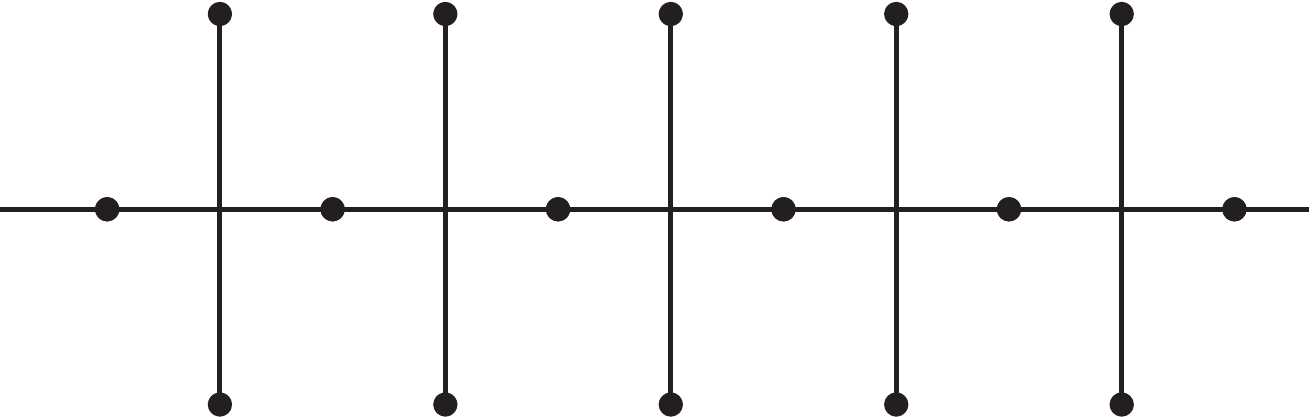}
		\vspace*{3mm}
    \caption{Structure within the odd length path and attached 
		perpendicular ``bars'' with length $\sqrt{3}$. Regarding the 
		representation of such non-integral coordinates in the problem 
    input, 
    we may scale the coordinates to some certain extent and
    round them to integers so that the relative distance between
    each other is precise enough
    for the proof.
    }
		\vspace*{-1mm}
    \label{fig:path-bar}
\end{figure}

\begin{figure}[ht]
    \centering
    \raisebox{14mm}{
    \begin{overpic}[scale=0.2]{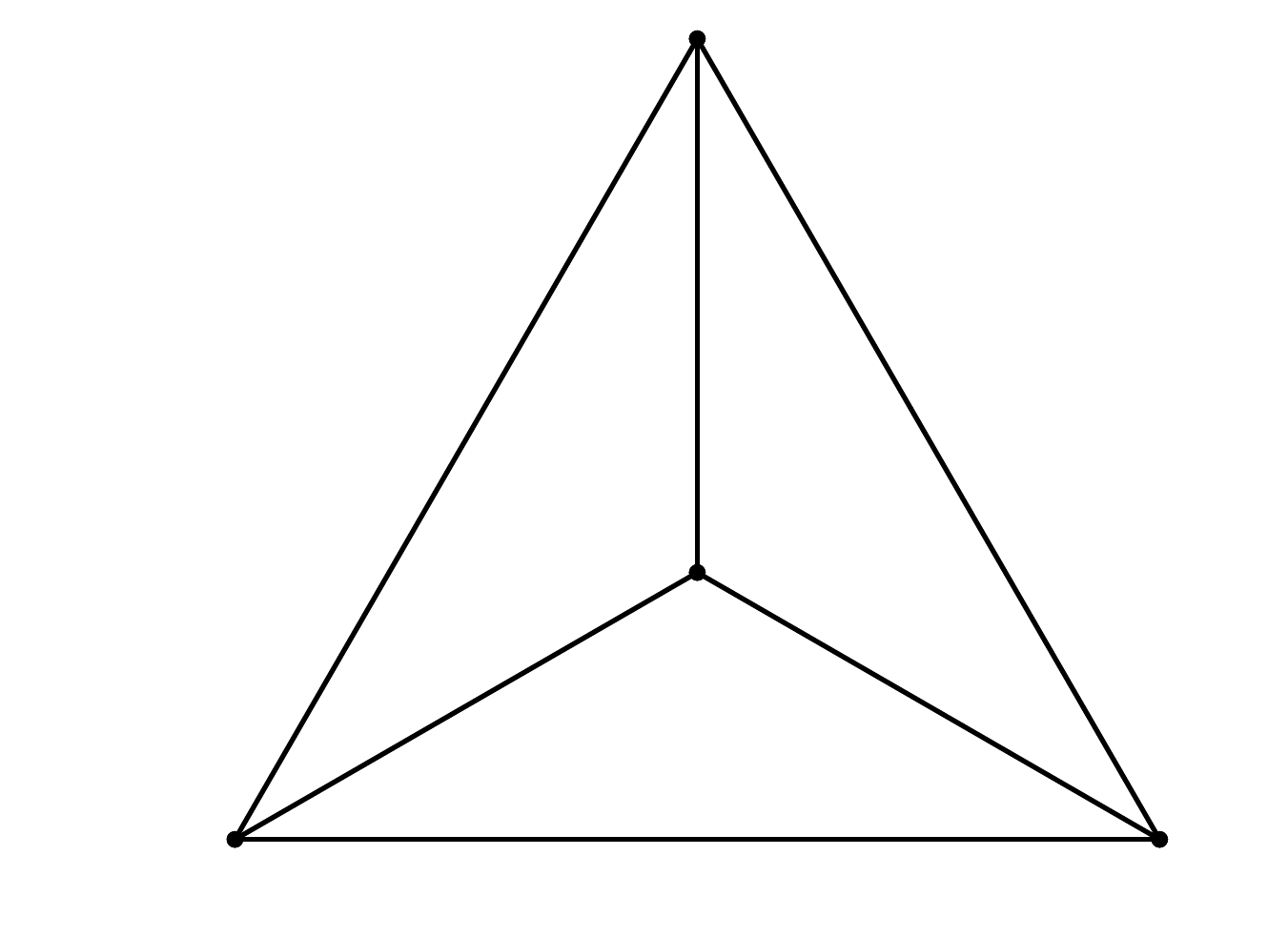}
      \put(115,23){$\to$}
      \end{overpic}
    }
      \begin{overpic}[scale=0.4]{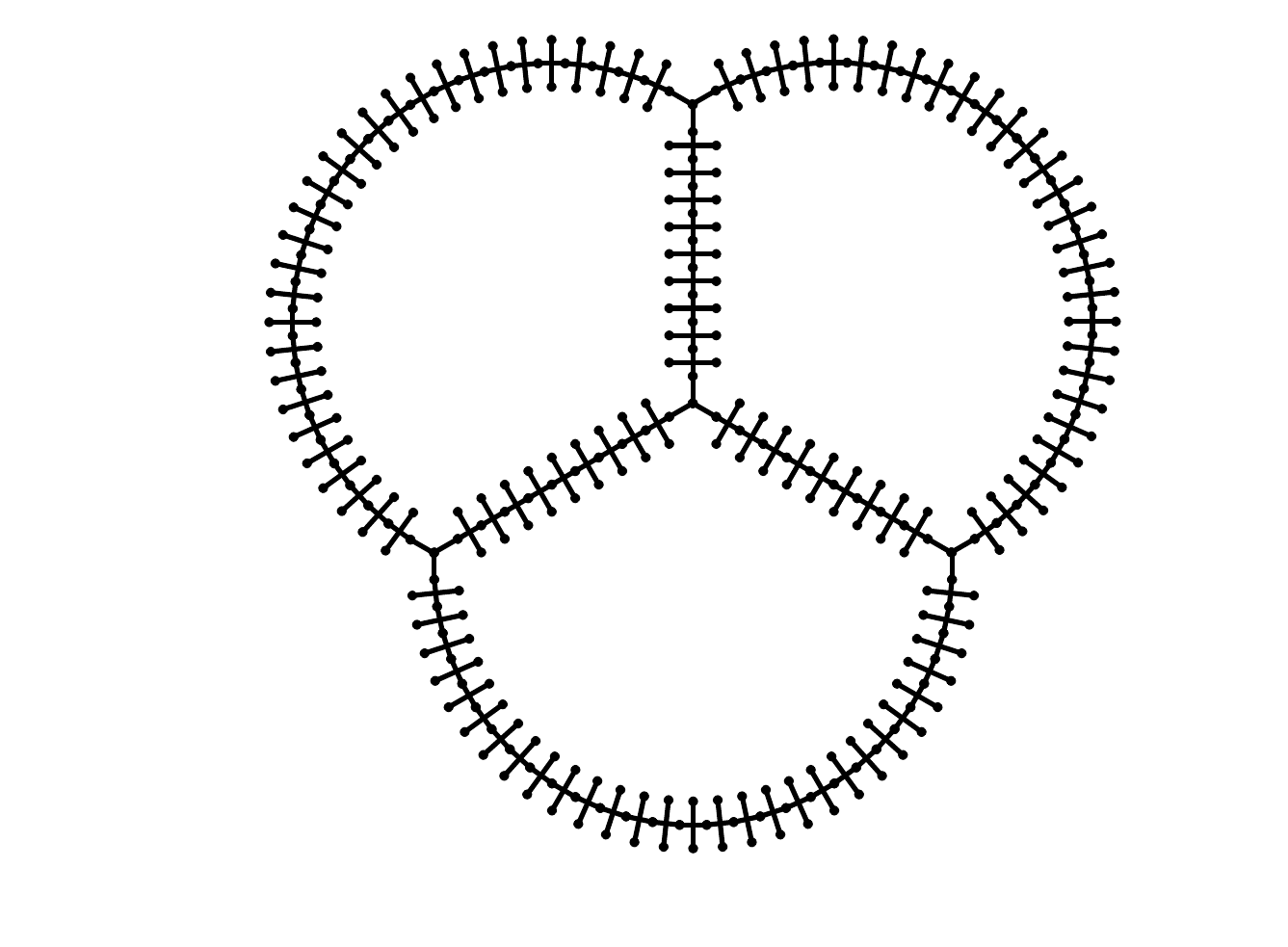}
        \end{overpic}
    \caption{Illustration of a $3$-net obtained from $K_4$, the complete graph on 
		$4$ vertices.}
    \label{fig:3-net}
\end{figure}

Let $L$ be the number of (unit length) edges of $G'$ (i.e., $L = 
\sum_{uv \in E(G)}len(u\cdots w)$). 
\begin{lemma}\label{l:bl}
A planar bridgeless $3$-regular graph $G$ has a vertex cover of size 
$k$ if and only if its transformed 3-net $T_G$ can be covered
by $K = k + (L-|E(G)|)/2$ circles of radius approximately $\alpha = 1.152$.
\end{lemma}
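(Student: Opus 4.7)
Proof plan: I prove the two directions separately.

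\textbf{($\Leftarrow$) Vertex cover $\Rightarrow$ circle cover.} Given a vertex cover $V_C$ of $G$ with $|V_C|=k$, I deploy $K$ circles of radius $\alpha$ as follows: one circle centered at each $v \in V_C$ (contributing $k$ circles and covering the three bare edges of length $1<\alpha$ incident to that vertex); and for each edge $uw \in E(G)$ with associated path $u v_1 \cdots v_{2m} w$ ($m=m_{uw}$), $m$ circles placed at every other interior path vertex, in the pattern that covers whichever of $u,w$ lies outside $V_C$ (arbitrarily if both are in $V_C$). Each such path circle has radius $1\le\alpha$ and covers the bar tips of its two adjacent plusses (which sit at distance exactly $1$) along with the adjacent path edges. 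Since every edge of $G$ has an endpoint in $V_C$, the case distinction is exhaustive, and the total count is $k+\sum_e m_e = k+(L-|E(G)|)/2 = K$.

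\textbf{($\Rightarrow$) Circle cover $\Rightarrow$ vertex cover.} Given $K$ radius-$\alpha$ circles covering $T_G$, the plan rests on three geometric facts established by direct smallest-enclosing-disc computations: (i) no single disc covers the bar tips of three consecutive plusses on one path (min radius $\sqrt{7}/2 > \alpha$); (ii) no single disc fully covers plusses from two different paths meeting at a common degree-$3$ vertex (min radius $\sqrt{3} > \alpha$ to enclose both first-plusses); and (iii) any disc containing a graph-vertex $u$ reaches at most the first plus on each incident path, since containing $u$ together with both tips of the second plus requires radius $\ge 7/5 > \alpha$. Facts (i) and (ii) imply that each path $uw$ must be charged at least $m_{uw}$ of the $K$ circles for its $2m-1$ plusses; fact (iii), combined with a pairing analysis of the plusses along the path, implies that $m_{uw}$ circles can cover at most one of the two endpoints (along with its bare edge)---covering both endpoints requires at least $m_{uw}+1$ circles on that path.

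\textbf{Combinatorial extraction.} With this structure, I convert the geometric cover into a vertex cover. Classify each circle as a \emph{plus circle} (covering at least one plus, necessarily of a single path by (i)--(ii)) or a \emph{non-plus circle}. Let $N$ be the number of non-plus circles and $n_e \ge m_e$ the plus-circle count for path $e$; then $N + \sum_e (n_e - m_e) = K - \sum_e m_e = k$. Let $M = \{e : n_e \ge m_e + 1\}$ and $D \subseteq V(G)$ the set of vertices at which some non-plus circle lies (each non-plus circle associates to a unique such vertex by a short geometric argument). Then $|D| \le N$ and $|M| \le \sum_e (n_e - m_e)$, giving $|D| + |M| \le k$. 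For each $e \notin M$, the path's $m_e$ plus circles cover at most one endpoint---call this $\phi(e)$ (arbitrary if neither). Setting $V_C' = D \cup \{\text{one endpoint of each } e \in M\}$, the set $V_C'$ is a vertex cover of size $\le k$: for any $e = uw \notin M$ with $\phi(e) = u$, if $w \notin D$ then every incident edge of $w$ must be $w$-oriented or lie in $M$, contradicting $\phi(e) = u$ and $e \notin M$; hence $w \in D \subseteq V_C'$.

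\textbf{Anticipated obstacle.} The principal difficulty is rigorously verifying facts (i)--(iii) for \emph{arbitrary} disc positions in $\mathbb{R}^2$, not merely axis-aligned extremal placements. The tightness of these bounds---most sharply the three-point configuration $\{u,(5/2,\sqrt{3}/2),(5/2,-\sqrt{3}/2)\}$ whose smallest enclosing disc has radius $7/5$, and the four-point ``two-paths'' configuration with radius $\sqrt{3}$---is what pins down the precise threshold $\alpha \approx 1.152$ stated in the lemma, and also explains the choice of bar length $\sqrt{3}$ in the $3$-net construction.
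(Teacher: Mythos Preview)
Your forward direction matches the paper's. The backward direction, however, diverges substantially and contains a real gap.

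The paper does \emph{not} argue by direct plus-counting. It proceeds by normalization: the governing geometric fact is that a disc of radius below $2\sqrt{3}/3 \approx 1.155$ can enclose four bar endpoints only in the symmetric pattern (two full adjacent bars), never three on one side and one on the other. From this the paper pushes an arbitrary cover into a canonical form in which circles march two-bars-at-a-time along each path, leaving an ``even end'' (two bars) and an ``odd end'' (one bar). The value $\alpha \approx 1.152$ then comes from a case analysis at the degree-$3$ \emph{junctions}: it is the largest radius at which the configuration with one odd end and two even ends still forces an additional circle (the paper notes that at radius $\approx 1.153$ that junction can be handled with no extra circle). Your facts (i)--(iii) have thresholds $\sqrt{7}/2 \approx 1.32$, $\sqrt{3}$, and $7/5$---none near $1.152$---and you perform no junction analysis at all. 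So your closing claim that these three bounds ``pin down the precise threshold $\alpha\approx 1.152$'' is unsupported; if your scheme were sound it would prove a strictly larger gap than the paper states, which should already be a warning sign given the paper's remark about radius $1.153$.

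The concrete break is in the combinatorial extraction. The assertion ``if $w \notin D$ then every incident edge of $w$ must be $w$-oriented or lie in $M$'' does not follow from anything you have established. All you can infer is that the bare end-segment of $e$ near $w$ is covered by \emph{some} circle; if that circle happens to be a plus circle of one neighbouring path $e'$, this says at most that $\phi(e') = w$ for that single $e'$, nothing about the remaining incident edges. Hence $V_C'$ as defined need not cover $e$. More structurally, your plus/non-plus dichotomy never accounts for how the bar-free end segments $uv_1$ and $v_{2m}w$ get covered, and it is precisely the interaction of three such end segments at a junction---the part your proposal skips---that produces both the threshold $1.152$ and the vertex-cover correspondence in the paper's proof.
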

\ifarxiv
{}
\else
\fi

\ifarxiv
{}
\else
We refer the readers to \cite{feng2020optimally} for the lengthy and technical 
proof. 
\fi
From the proof, it is also clear that Lemma~\ref{l:bl} holds for discs 
with radius in $[1, \alpha)$. Thus, approximating $size(T_G, K)$ to less than a 
factor of $\alpha$ will decide whether $G$ has a vertex cover of size $k$, yielding 
the hard-to-approximate result. Also, it can be observed that all lengths are 
polynomial with respect to the problem input size, which implies strongly 
NP-hardness.
\begin{theorem}\label{t:3nethard}
The minimum radius for cover a $3$-net using 
$k$ circular discs 
is strongly NP-hard to approximate within a 
factor of $\alpha \approx 1.152$.
\end{theorem}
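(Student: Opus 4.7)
The plan is to reduce, in polynomial time, from vertex cover on planar bridgeless $3$-regular graphs, whose NP-completeness was established in Section~\ref{subsec:3regular}. Given an instance $(G,k)$ of that problem, I would construct the $3$-net $T_G$ exactly as in Section~\ref{complexity:3netcomp}, and set the target number of discs to $K = k + (L-|E(G)|)/2$, where $L$ denotes the number of unit edges of the backbone $G'$. By the construction, $L$, $K$, the number of bars, and all coordinates (after the scaling/rounding remark in Fig.~\ref{fig:path-bar}) are bounded by a polynomial in $|V(G)|+|E(G)|$, so the reduction is polynomial in both the usual and the strong sense.

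Next, I would invoke Lemma~\ref{l:bl} to translate the ``vertex cover of size $k$'' question into a ``$K$ discs of radius $\alpha$'' question, but with a twist: the proof of Lemma~\ref{l:bl} really establishes an equivalence for every radius $r \in [1,\alpha)$. In the forward direction, the explicit pattern of Fig.~\ref{fig:pathcover} uses radius exactly $1$, so $K$ discs of any radius $\ge 1$ cover $T_G$ whenever $G$ has a vertex cover of size $k$. In the backward direction, the case analysis on bar endpoints, path ends, and junctions (Figs.~\ref{fig:pathcovert}--\ref{fig:junction}) uses only that the radius is strictly less than $\alpha$: below this threshold the asymmetric $4$-endpoint cover, the non-symmetric two-bar patterns, and the degree-$2$ junction coverage in Fig.~\ref{fig:junction}(b) are all infeasible, forcing the extra discs that correspond to vertices of a valid cover.

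With this sharpened equivalence, I would derive inapproximability by a standard gap argument. Suppose, for contradiction, that there is a polynomial-time algorithm that approximates $size(T_G,K)$ within some factor $\beta < \alpha$. Run it on $T_G$ and compare its output to $\beta$: if the output is at most $\beta$, then the true optimum is less than $\alpha$, so by the extended Lemma~\ref{l:bl} the graph $G$ has a vertex cover of size $k$; otherwise the true optimum exceeds $1$, so $T_G$ cannot be covered by $K$ unit discs and hence $G$ has no vertex cover of size $k$. This decides the NP-complete problem in polynomial time, contradicting $\mathrm{P}\neq\mathrm{NP}$. Because the reduction uses coordinates of polynomial magnitude, the hardness is strong.

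The main obstacle I anticipate is not the gap-preserving reduction itself, which is routine once Lemma~\ref{l:bl} is in hand, but rather being careful that the proof of Lemma~\ref{l:bl} genuinely yields a clean threshold at $\alpha$ rather than merely at the single value stated. In particular, I would want to revisit the junction analysis in Fig.~\ref{fig:junction}(b), which is what pins $\alpha$ down, and verify that the inequality is strict throughout the interval $[1,\alpha)$ so that any radius below $\alpha$ forces the extra disc at a covered vertex. Once that strictness is confirmed, the theorem follows immediately.
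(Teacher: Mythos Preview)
Your proposal is correct and follows essentially the same route as the paper: observe that the proof of Lemma~\ref{l:bl} in fact yields the equivalence for every radius in $[1,\alpha)$, then run a standard gap argument to conclude that any polynomial-time $\beta$-approximation with $\beta<\alpha$ would decide vertex cover on planar bridgeless $3$-regular graphs, with the polynomial magnitude of all coordinates giving strong NP-hardness. The paper states this in a single paragraph immediately after Lemma~\ref{l:bl}; your write-up simply spells out the gap argument and the threshold comparison more explicitly.
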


\subsection{From $3$-Net to A Simple Polygon}\label{subsec:osgthard}
We proceed to show that \osgt is hard to approximate for a simple polygon 
by converting a $3$-net into one. Along the backbone $G'$ of a $3$-net 
$T_G$, we first expand the line segments by $\delta$ to get a 2D region  
(see Fig.~\ref{fig:door}(a)). We may describe the interior of the resulting 
polygon as 
\vspace*{-1mm}
\[P=\{p\in \mathbb{R}^2\ |\ \min_{q\in T_G}(\|p-q\|_1)\leq \delta/2\}\]
\vspace*{-4mm}

For small enough $\delta$, it's clear that $P$ is a polygon with holes.
Let $K = (({L-|E|}){2}+k)$, it holds that
\vspace*{-1mm}
\begin{align*}
&size(K, T_G) \leq               size(K, P)\leq size(K, T_G) + \delta,\\
&size(K, T_G) \leq      size(K, \partial P)\leq size(K, T_G) + \delta.
\end{align*}
\vspace*{-5mm}

To convert the structure into a simple polygon, we can open ``doors'' of 
width $\delta$ on the structure to get rid of the holes (see 
Fig.~\ref{fig:door}(b)). Each opening removes one hole from $P$. This is 
straightforward to check; we omit the details. 
\begin{figure}[ht]
		\centering
		\vspace*{0mm}
    \begin{small}
    \includegraphics[scale=.3]{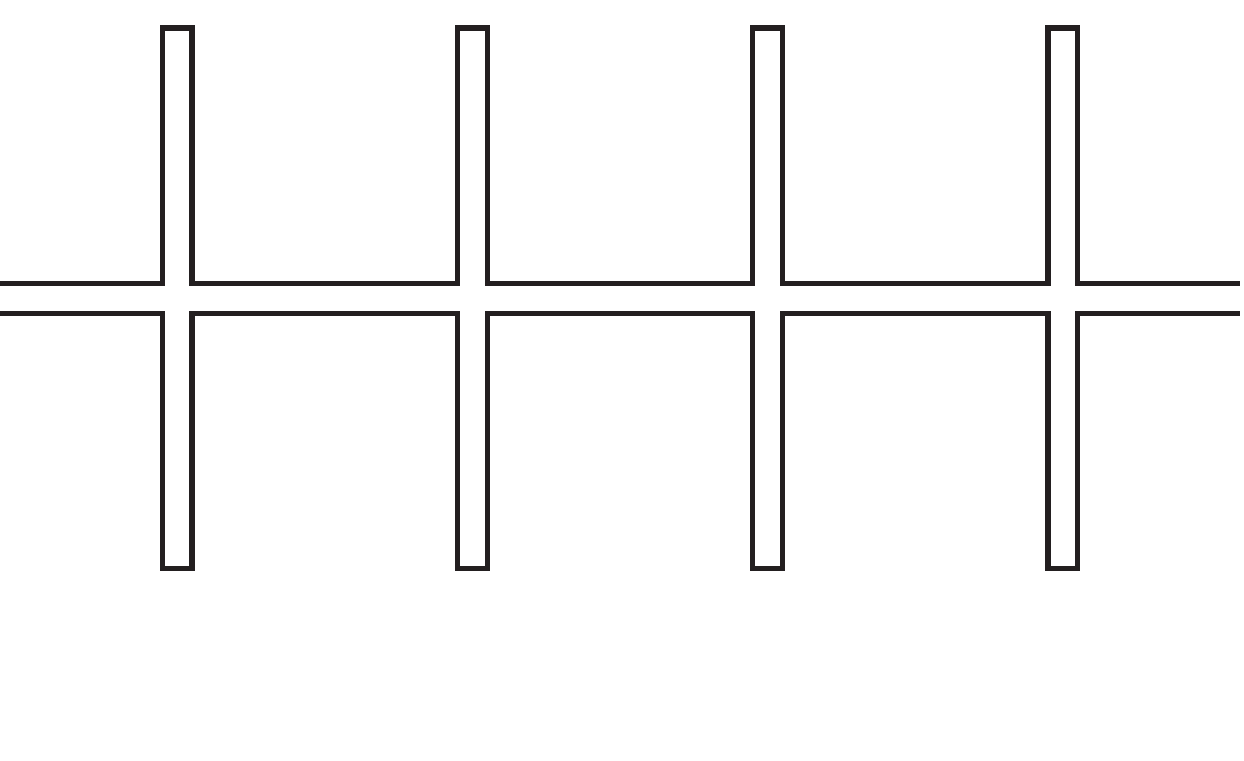} 
    \hfill
    \begin{overpic}[scale=.3]{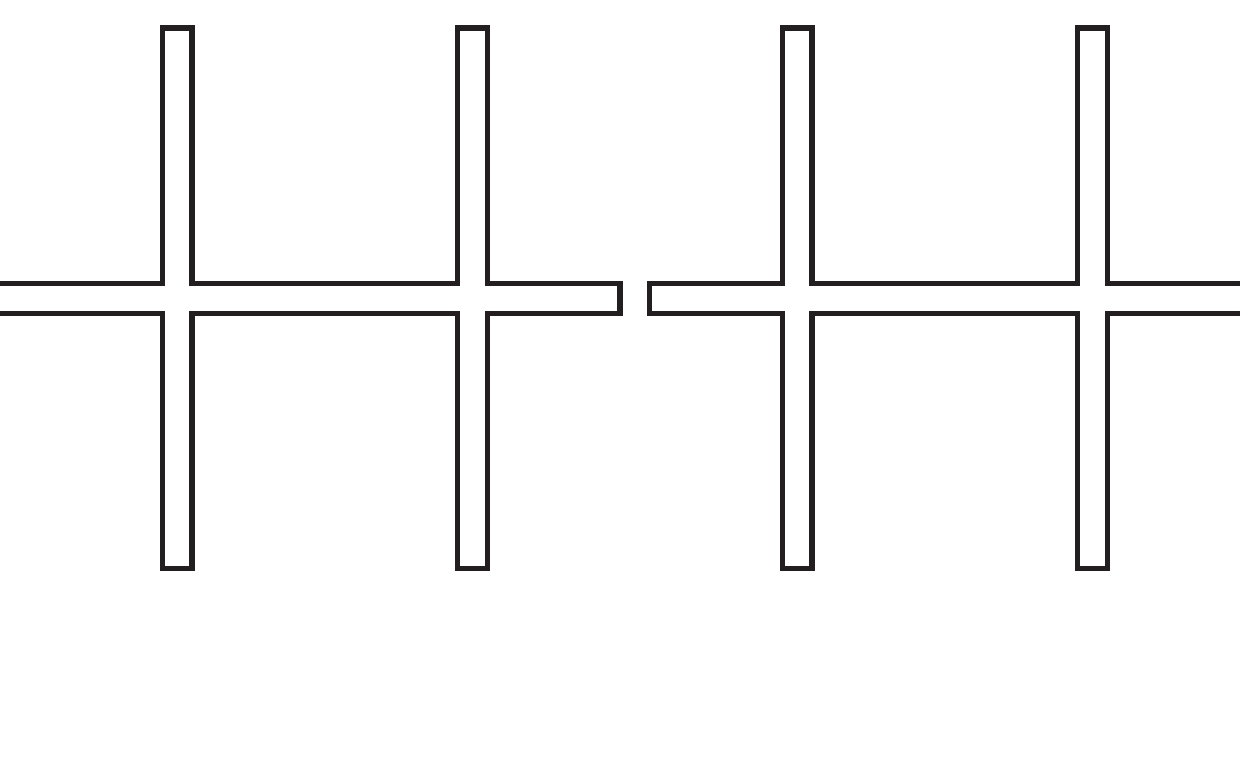}
        \put(50,42){$\delta$}
        \put(-75,3){(a)}
        \put(46,3){(b)}
    \end{overpic}
	  \end{small}
		\vspace*{1mm}
    \caption{(a) A $3$-net $T_G$ maybe readily converted into a simple polygon 
		$P$ with holes by expanding along its backbone. (b) Creating a ``door'' 
		of width $\delta$ will remove one hole from $P$.}
    \label{fig:door}
\end{figure}

Denoting the resulting simple polygon as $P'$, we have
\vspace*{-1mm}
\begin{align*}
&size(K, P) - \delta  \leq               size(K, P')\leq size(K, P),\\
&size(K, \partial P) - \delta  \leq      size(K, \partial P')\leq size(K, \partial P).
\end{align*}
\vspace*{-5mm}

Therefore, both $size(k, P')$ and $size(k, \partial P')$ are between 
$size(k, T_G) - \delta$ and $size(k, T_G) + \delta$. Suppose the \osgt for 
$\partial P'$ or $P'$ has a polynomial approximation algorithm with 
approximation ratio $1.152 - \varepsilon$ where $\varepsilon > 0$, let $\delta = 
\varepsilon/2$, then the optimal guarding problem for the $T_G$ can 
be approximated within 1.152 disobeying the inapproximability gap by 
Theorem~\ref{t:3nethard}. Therefore,

\begin{theorem}\label{t:osgthard}
\osgt is NP-hard and does not admit a polynomial time approximation 
within a factor of $\alpha$ with $\alpha \approx 1.152$, unless 
P$=$NP.
\end{theorem}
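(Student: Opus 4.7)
The plan is to reduce the hard-to-approximate $3$-net covering problem (Theorem~\ref{t:3nethard}) to \osgt on a simple polygon, preserving the $\alpha \approx 1.152$ inapproximability gap up to an arbitrarily small slack. Since Theorem~\ref{t:3nethard} already pins down the minimum radius required to cover a $3$-net $T_G$ to within a factor better than $\alpha$ as NP-hard, the entire task here is to show that an efficient approximation for \osgt on the constructed simple polygon would induce a correspondingly good approximation for the $3$-net problem.

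First I would formalize the two-step polygonalization of $T_G$ already sketched in the text. Step one is the Minkowski-type thickening $P=\{p\in\mathbb{R}^2 : \min_{q\in T_G}\|p-q\|_1 \le \delta/2\}$, which, for $\delta$ small relative to the unit edge length and the $\sqrt 3$ bar length, is a polygon with holes whose boundary and interior each lie within Hausdorff distance $\delta$ of $T_G$. This immediately yields the sandwich bounds
\[
size(K,T_G) \le size(K,P) \le size(K,T_G)+\delta,
\]
and the analogous inequality for $\partial P$, because any disc cover of $T_G$ of radius $r$ inflates to one of radius $r+\delta$ for $P$, while any cover of $P$ restricts to a cover of $T_G\subset P$. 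Step two cuts ``doors'' of width $\delta$ in the thickened structure to eliminate holes one at a time, producing a simple polygon $P'$; each door perturbs the relevant size quantity by at most $\delta$, giving
\[
size(K,P)-\delta \le size(K,P') \le size(K,P),
\]
and similarly for $\partial P'$.

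Composing the two sandwiches yields $|size(K,P')-size(K,T_G)|\le \delta$ and likewise for $\partial P'$. Now suppose, for contradiction, that there is a polynomial-time $(\alpha-\varepsilon)$-approximation for \osgt on simple polygons, for some $\varepsilon>0$. Choosing $\delta = \varepsilon \cdot size(K,T_G)/C$ for a suitable constant $C$ (it suffices to pick $\delta$ strictly smaller than the additive gap between the two sides of the $3$-net decision threshold scaled by $\varepsilon$), the approximate answer on $P'$ or $\partial P'$ translates, via the two sandwich inequalities, into an approximation for $size(K,T_G)$ with ratio strictly below $\alpha$, contradicting Theorem~\ref{t:3nethard}. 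Because $size(K,T_G)\in[1,\alpha)$ is a constant-order quantity, this choice of $\delta$ is still polynomial in the input size.

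The main obstacle, and the only place that requires care, is verifying that the thickening and door operations actually produce a valid simple polygon and do not introduce unintended shortcuts in coverage. I would need to check that $\delta$ can be chosen small enough so that (i) thickenings of distinct backbone edges meeting at a $3$-valent junction do not merge with thickenings of bars, (ii) the perpendicular bars of length $\sqrt 3$ remain distinguishable in $P'$, and (iii) each door only removes one hole and does not create a new one or split the polygon. All three are straightforward for $\delta$ smaller than a fixed fraction of the minimum feature size of $T_G$, but need an explicit bound. Once this is done, combining the sandwich inequalities with Theorem~\ref{t:3nethard} immediately delivers the claimed $\alpha$-inapproximability of \osgt on simple polygons and, a fortiori, on their boundaries, completing the proof of Theorem~\ref{t:osgthard}.
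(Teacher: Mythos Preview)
Your proposal is correct and follows essentially the same route as the paper: thicken the $3$-net by $\delta$ (using the $\ell_1$ Minkowski expansion), cut $\delta$-wide doors to remove holes, compose the two sandwich inequalities to get $|size(K,P')-size(K,T_G)|\le\delta$, and then pick $\delta$ small relative to $\varepsilon$ so that a hypothetical $(\alpha-\varepsilon)$-approximation on $P'$ (or $\partial P'$) would beat the $\alpha$ gap of Theorem~\ref{t:3nethard}. The paper simply takes $\delta=\varepsilon/2$ (which works because $size(K,T_G)\ge 1$) and omits the geometric sanity checks you flag in (i)--(iii), so your write-up is, if anything, slightly more careful than the original.
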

\subsection{\opgt with Sensor Guarding Limitations}\label{subsec:2-seghard}
The inapproximability gap from Theorem~\ref{t:osgthard} prompts us to 
further consider limitations on the setup with the hope that meaningful 
yet more tractable problems may arise. One natural limitation is to 
limit the number of continuous segments a mobile sensor may cover. As 
will be shown in Section~\ref{subsec:singleseg}, if a mobile sensor may 
only guard a single continuous perimeter segment, a 
$(1 + \varepsilon)$-optimal solution can be computed efficiently. 
On the other hand, it turns out that if a sensor can guard up to two 
continuous perimeter segments, \opgt remains hard to approximate. 

\begin{theorem}\label{them:twoconthard}
\opgt of a simple polygon cannot be approximated within $\alpha\approx 1.152$ 
even when each robot can guard no more than two continuous boundary segments,
unless P$=$NP.
\end{theorem}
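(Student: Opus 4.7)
The plan is to modify the reduction of Section~\ref{subsec:osgthard} so that the resulting simple polygon, no matter where a guard disc is placed, touches at most two disjoint boundary segments, while still encoding the same hard vertex-cover instance within the threshold factor $\alpha$. The crucial change, sketched in the paper's commented remark following Lemma~\ref{l:bl}, is to route the perimeter along an Eulerian circuit of a matching-doubled graph rather than around the raw 3-net backbone $G'$, so that each original 3-regular vertex is visited by the perimeter exactly twice instead of three times.

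Starting from a planar bridgeless 3-regular graph $G$, which by the vertex-cover hardness of Section~\ref{subsec:3regular} already provides a hard instance, Petersen's theorem guarantees a perfect matching $M\subseteq E(G)$, computable in polynomial time via Edmonds' blossom algorithm. Doubling every edge of $M$ produces a planar 4-regular multigraph $G^\ast$. Using the rotation system induced by the planar embedding, I select a non-crossing Eulerian circuit $C$ of $G^\ast$: at each degree-4 vertex, cyclically adjacent edges are paired so that the two visits do not cross. A face-tracing / transition-system argument then shows that the pairings can be chosen globally so that $C$ forms a single closed walk, yielding a simple closed curve in the plane.

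Next, I reuse the 3-net thickening of Section~\ref{complexity:3netcomp} with $C$ as the backbone: each original edge of $G$ is still subdivided into an odd-length unit path, perpendicular $\sqrt{3}$-bars are attached as in Fig.~\ref{fig:path-bar}, and the whole figure is fattened by a small $\delta$, followed if necessary by the door-cutting trick of Fig.~\ref{fig:door}(b) to obtain a simple polygon $P'$. Because $C$ visits every original 3-regular vertex of $G$ exactly twice rather than three times, the perimeter of $P'$ presents at most two concave arcs near each such vertex, so any disc of radius at most $\alpha$ placed anywhere on $\partial P'$ can touch at most two disjoint perimeter segments, whether it covers the two parallel sides of a path strand or the two arcs at a 2-pass vertex.

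The counting and geometric arguments of Lemma~\ref{l:bl} then transfer: the per-path coverage patterns of Fig.~\ref{fig:pathcover}, the extremal asymmetric configuration of Fig.~\ref{fig:pathcovert}, and the junction cases of Fig.~\ref{fig:junction} are re-run with the 2-pass geometry, yielding the same radius threshold $\alpha$ and the same bijection between vertex covers of $G$ of size $k$ and coverings of $\partial P'$ of size $K=k+(L-|E(G)|)/2$ up to a fixed additive constant absorbing the contribution of the doubled matching edges; the inapproximability claim then follows as in Theorem~\ref{t:osgthard}. The main obstacle I expect is twofold: first, verifying that the pairings can always be chosen to produce a single Eulerian circuit (rather than a union of several non-crossing cycles) for every planar bridgeless 3-regular graph; and second, re-checking the junction case analysis in the 2-pass geometry to confirm that no disc of radius less than $\alpha$ can sneak a third disjoint boundary arc through a doubled matching edge.
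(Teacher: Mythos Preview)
Your proposal is essentially the paper's own argument: Petersen's theorem plus Edmonds' algorithm to obtain a perfect matching, double the matching edges to a $4$-regular planar multigraph, take an Eulerian circuit, and trace the polygon boundary along that circuit so that every junction is visited twice and any disc of radius below $\alpha$ meets at most two boundary arcs.

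The one place where the paper differs from your sketch is precisely the first obstacle you flag. You propose to pick non-crossing transitions at each vertex via the rotation system and then argue that the result is a single closed walk; but for an arbitrary planar $4$-regular graph a globally non-crossing transition system need not be connected, so this step could fail as stated. The paper avoids this issue by working in the opposite order: it first takes \emph{any} Eulerian tour of the $4$-regular graph (existence is automatic), then removes each planar self-intersection by a local uncrossing, observing that at every crossing one of the two possible uncrossings keeps the tour connected (the other would split it into two cycles). This gives a single non-crossing closed curve without any global transition-system argument. With that fix your second worry also evaporates, since the paper simply inspects the two possible local pictures at a doubled vertex and checks directly that a disc of radius below $\alpha$ meets at most two boundary arcs there; the bar/junction analysis of Lemma~\ref{l:bl} is reused verbatim rather than redone.
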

\begin{proof}
Due to \cite{petersen1891}, every bridgeless $3$-regular graph $G$ has a 
perfect matching. We can obtain such a perfect matching of the $3$-regular 
graph using Edmonds Blossom algorithm in polynomial time\cite{edmonds_1965}. 
Doubling the edges in the perfect matching, we can then obtain a 4-regular 
graph $G'$. 
With each vertex's degree even on $G'$, a Eulerian tour exists
on $G'$ and can be efficiently computed. For the $3$-net $T_G$, we may 
incorporate the bars into the Eulerian tour, corresponding to that
for $G'$, as illustrated in Fig.~\ref{fig:path_exp}.
\begin{figure}[ht]
		\vspace*{-2mm}
    \centering
    \includegraphics[scale=0.3]{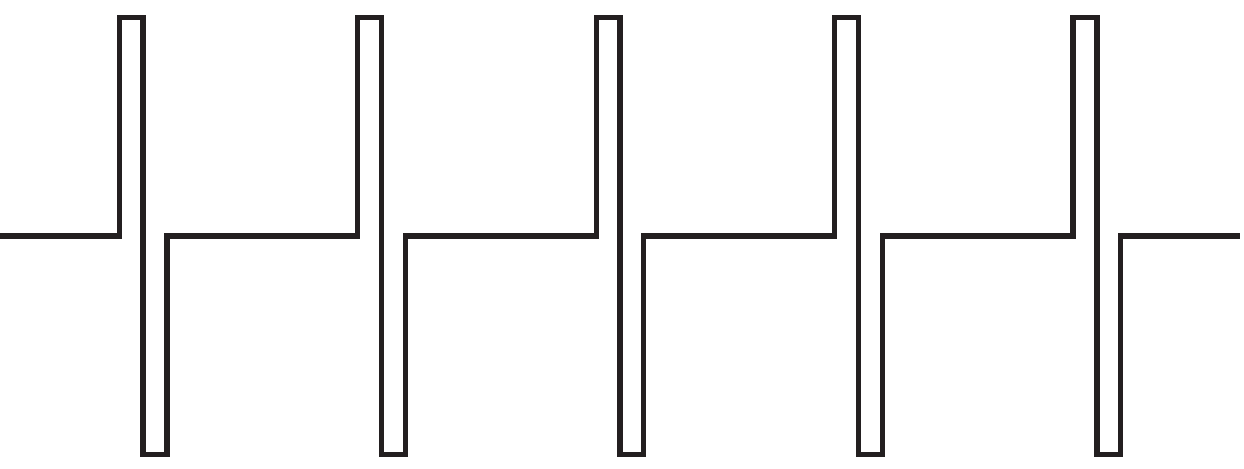}\hspace{2mm}
	  \begin{overpic}[scale=.3]{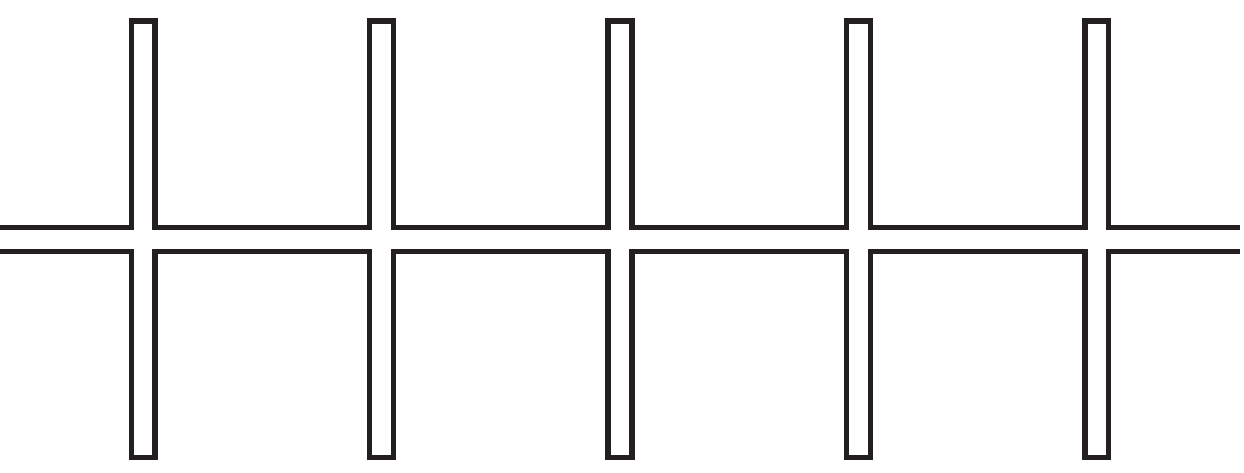}
        \put(-55,-10){(a)}
        \put(49,-10){(b)}
    \end{overpic}
		\vspace*{6mm}
    \caption{(a) Part of the augmented Eulerian path for non-doubled 
		paths. (b) Part of the augmented Eulerian path for doubled paths.}
    \label{fig:path_exp}
		\vspace*{-1mm}
\end{figure}

The Eulerian tour on $T_G$ may have self-intersections, which 
will prevent the tour from being a simple polygon. To address this, we may use 
one of two possible solutions outlined in Fig.~\ref{fig:eliminter} to eliminate
the self-intersections. 
\begin{figure}[ht]
    		\vspace*{2mm}
				\centering
	  \begin{overpic}[width=\columnwidth]{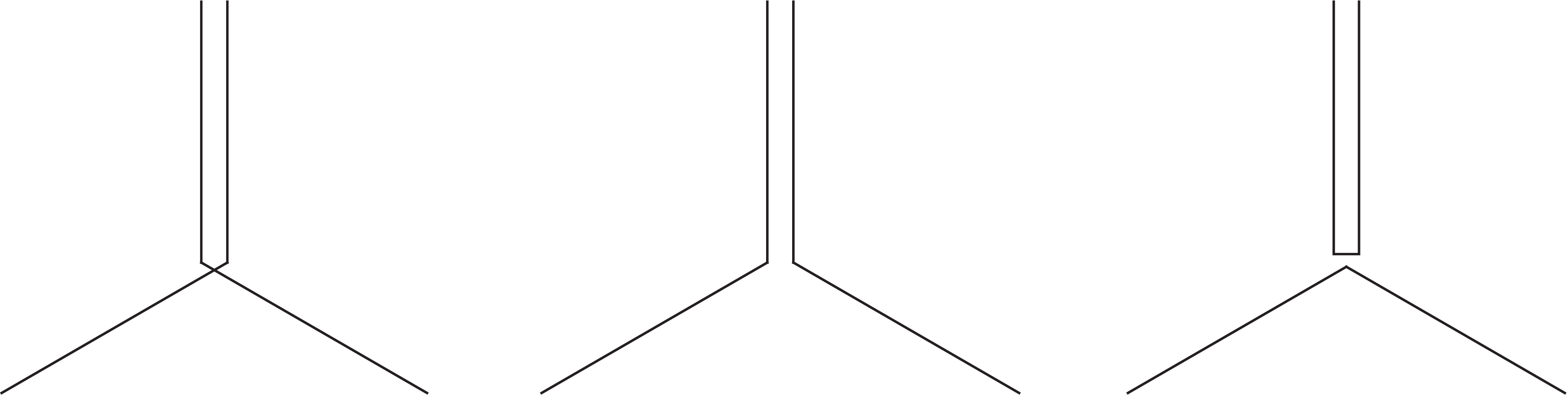}
        \put(12,-6){(a)}
        \put(48,-6){(b)}
        \put(84,-6){(c)}
    \end{overpic}
		\vspace*{3mm}
        \caption{In order to eliminate possible self-intersections in 
				(a), we may transform it into one of the solutions given in 
				(b) and (c) to make the Eulerian tour remain connected (one 
				of the two solutions will satisfy this).}
        \label{fig:eliminter}
\end{figure}

At this point, we readily observe that Theorem~\ref{t:osgthard} applies. 
Furthermore, an optimal solution always allows each mobile sensor to 
cover only two continuous perimeter segments. This is clear in the middle 
of any paths of $T_G$; at junctions, the polygon boundary will be either 
one of two possibilities shown in Fig.~\ref{fig:2types}, where a sensor
again covers at most two continuous segments of the simple polygon. 
\begin{figure}[!ht]
    \centering
    \includegraphics[scale=.29]{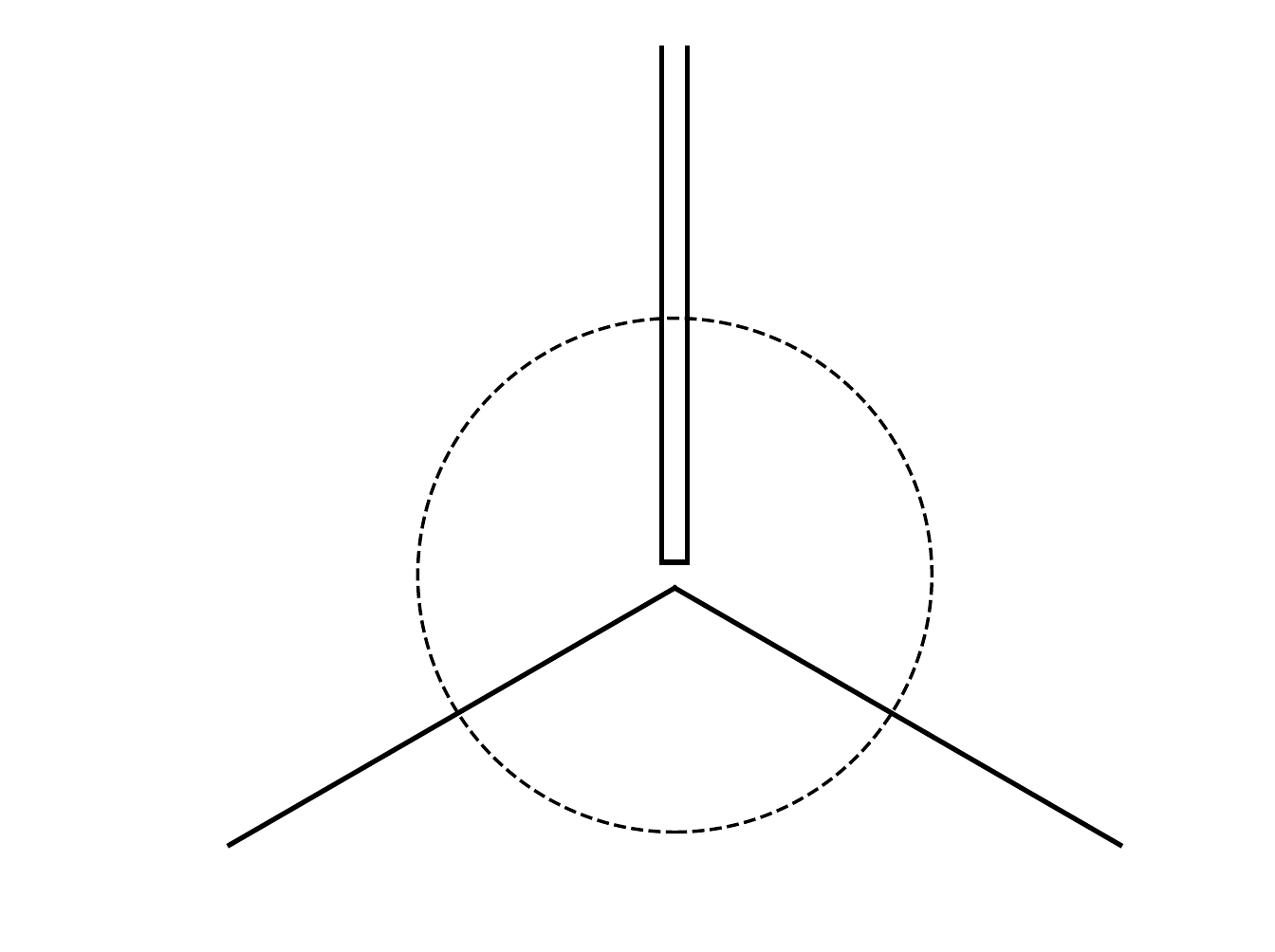}
    \includegraphics[scale=.29]{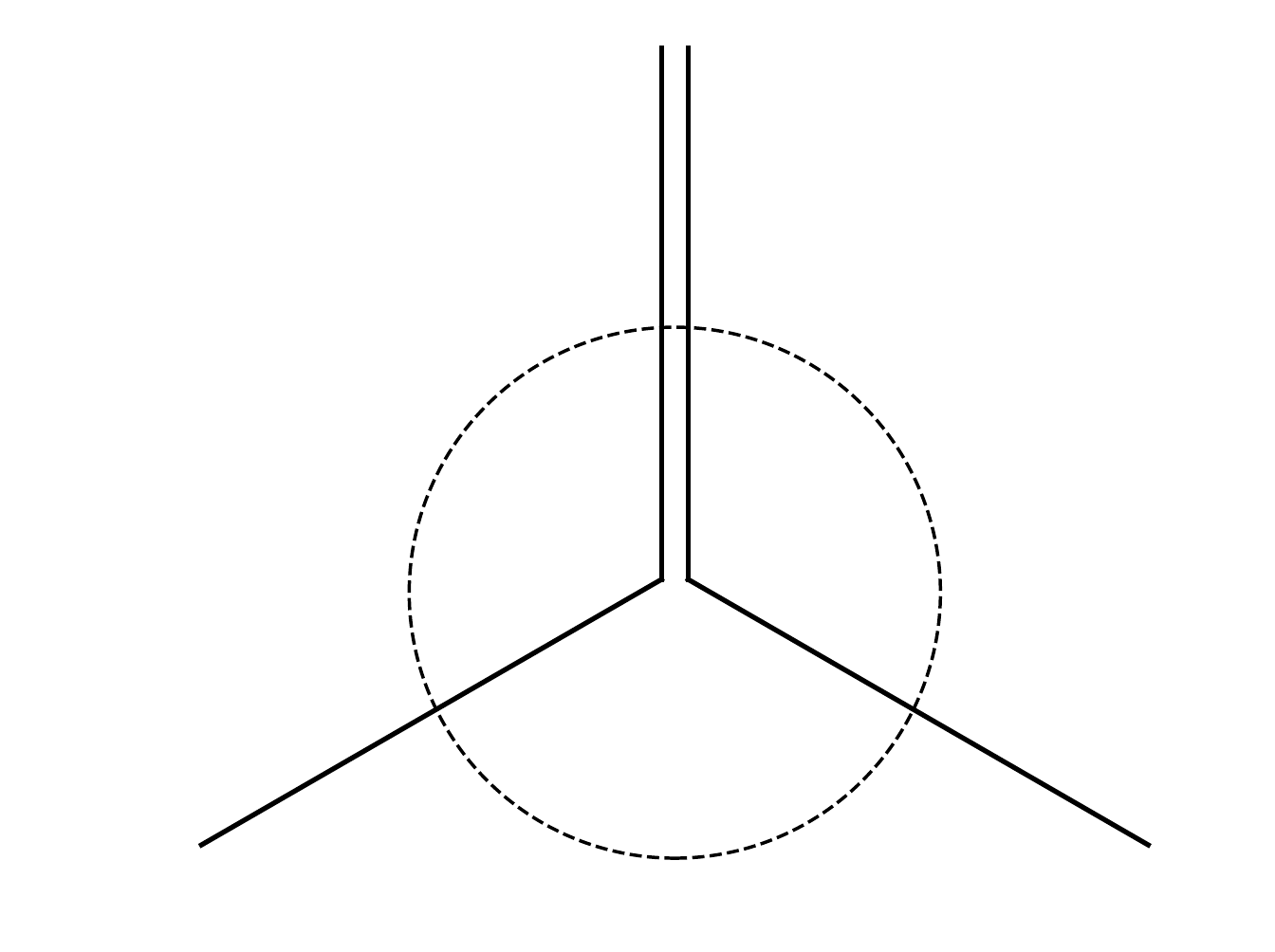}
    \caption{The figure shows two possible types of boundaries near a 
		vertex with degree of $4$. A robot near the vertex will only be able 
		to cover two disjoint but individually continuous boundary segments 
		with sensing radius less than $\alpha$ if the solution is to be optimal.}
    \label{fig:2types}
\end{figure}
\end{proof}

\section{Effective Algorithmic Solutions for \osgt}\label{sec:algo}
\def\algomed{\textsc{Min\_Enclose\_Disc}\xspace}
In this section, we present several algorithmic solutions for \osgt. 
First, a fully polynomial approximation scheme (FPTAS)
is presented that solve \opgt with the additional requirement 
that each sensor is responsible for a continuous perimeter segment. 
This contrasts Theorem~\ref{them:twoconthard}. Then, we show that 
there exist polynomial time algorithms that readily guarantee a
$(2+\varepsilon)$-approximation for \osgt. This is followed by an 
integer linear programming (ILP) method that delivers high-quality
solutions (as compared with the $(2 + \varepsilon)$-approximate one)
and has good scalability. 

In preparation for introducing the result, we first describe a method
that is used for discretizing the problem. For a simple polygon $P$, 
we can approximately represent its boundary $\partial P$ as a set of 
balls with radius $\varepsilon$ along $\partial P$, by splitting  
$\partial P$ into $N=\lceil{len(\partial P)}/({2\varepsilon})\rceil$ 
continuous pieces of length at most $2\varepsilon$ and putting the balls' 
centers at their midpoints. 
Denote set of $\varepsilon$-balls as $S_B$, and the set of their centers 
as $S_O=\{o_1,\dots, o_N\}$. Since it holds that $size(k,\partial P) \leq 
size(k, S_B) \leq size(k, S_O) + \varepsilon \leq size(k, \partial P) 
+ \varepsilon$, the minimum coverage radius of the discritized version 
of covering $S_O$ will differ no more than $\varepsilon$ from the original 
problem of covering $\partial P$. Similarly, for covering the interior 
of $P$, we can put $P$ into a grid with cell side length $\varepsilon$, 
and set the center of the grid cells intersecting with $P$ as $S_O$, 
creating at most $N=O(({len(\partial P)}/{\varepsilon})^2)$ samples. 
The discretization process converts guarding $P$ or $\partial P$ to 
guarding $S_O$. 

\subsection{\opgt with Single Segment Guarding Limitation}\label{subsec:singleseg}
By Theorem.~\ref{them:twoconthard}, if a mobile sensor can guard up to two 
continuous perimeter segments, \osgt is hard to approximate within 
$1.152$-optimal. 
Translating this into guarding elements of $S_O$, this means that a sensor
can guard two {\em chains} of elements from $S_O$, where each chain contains 
some $m$ elements $o_1, \ldots, o_m$ that are neighbors along $\partial P$. 
Interestingly, if each sensor may only guard a single chain of elements 
from $S_O$, we may compute an optimal cover for $S_O$ using $O(N^2 \log N)$ 
time. This readily turns into a fully polynomial time approximation scheme 
(FPTAS) for \opgt. 
The algorithm operates by checking multiple times whether a given radius 
$r$ is sufficient for $k$ discs of the given radius to cover elements of 
$S_O$ where each disc covers only a single chain of elements. 

A single feasibility check is outlined in Algorithm~\ref{algo:cont}.
In the pseudo code, it is assumed that the indices are modulo $N$, e.g. 
$M[N+1] = M[1]$, $o_{N+1} = o_1$. 
Algorithm~\ref{algo:cont} is based on an efficient implementation of 
a subroutine \algomed (from e.g., \cite{welzl1991smallest,
Mark1997computation}) that computes the disc with minimum radius to 
enclose a given set of points in expected linear time. With this, a sliding 
window can be applied to find the rightmost $end$ for each $1\leq i\leq 
N$ such that $o_i, \dots, o_{end}$ can be enclosed in a circle of radius 
$r$. The length of this sequence is stored in $M[i]$. 

As $o_{end}$ cannot come around and meet $o_i$, the total call to 
\algomed is no more than $2N$. After this, the algorithm simply tries to put 
discs from each $o_i$ to cover as many centers as possible to see 
whether $S_O$ can be enclosed with $k$ discs. An optimization can be made 
by only examining starting point as $o_1, \dots, o_{M[1]+1}$, since there 
is no circle of radius of $r$ that can cover them together by the 
definition of $M$.
The apparent complexity of Algorithm~\ref{algo:cont} is $O(N^2)$. Since 
there are a total of $N$ points and $k$ robots, in a majority 
of cases a circle would enclose about $N/k$ points, which effectively 
lowers the time complexity to $O(N^2/k)$.

\def\algoptfeasi{\textsc{Opg\_2D\_Cont\_Feasible}}
\SetKw{Continue}{continue}
\SetKw{True}{true}
\SetKw{False}{false}
\begin{algorithm}
\begin{small}
\DontPrintSemicolon
\SetKwComment{Comment}{\%}{}
\caption{\textsc{Opg\_2D\_Cont\_Feasible}}    
\label{algo:cont}
\SetKwInOut{Input}{Input}
\SetKwInOut{Output}{Output}
\KwData{$S_O=\{o_1, \dots, o_N\}$, sample points in circular order\\
\qquad\quad$k$, the number of robots\\
\qquad\quad $r$, the candidate sensing radius}
\KwResult{\True or \False, indicating whether $S_O$ can be covered with k discs with radius $r$}
\vspace{2mm}

\If{{\sc Min\_Enclose\_Disc}($o_1, \dots, o_N$)$\leq r$} {\Return{\True}}
\vspace{1.5mm}

\Comment{\small Phase 1: find the maximum number of consecutive points a disc of radius $r$ can enclose
from each $c_i$.}
\vspace{1.5mm}

$M\leftarrow$ an array of length $N$; $end\leftarrow 1$;\;
\vspace{1.5mm}

\For{$i=1$ \KwTo N}{
\While{{\sc Min\_Enclose\_Disc($o_i, \dots, o_{end+1}$) $\leq r$ }}{
    $end\leftarrow end+1$;\;
}
$M[i]\leftarrow end - i + 1$;\;
}
\vspace{1.5mm}

\Comment{\small Phase 2: try to tile from each $o_i$.}
\vspace{1.5mm}

\For{$i=1$ \KwTo $N$}{
$j \leftarrow i$, \quad $cnt \leftarrow k$;\;
\While{$cnt > 0$}{
    $j \leftarrow j + M[j]$;\;
    \If{$j-i \geq N$}{\Return{\True}}
    $cnt\leftarrow cnt-1$\;
}
}
\Return{\False}
\end{small}
\end{algorithm}

Note that for the optimal coverage radius $r^*$, it holds that $r_{min} = 
0 < r^* \leq {len(\partial P)}/({2k}) = r_{max}$. 
Recall that $N=\lceil len(\partial P)/(2\varepsilon)\rceil$.
Hence, after at most 
\[
  \log\frac{r_{max} - r_{min}}{\varepsilon} = \log(\frac{len(\partial P)}{2k\varepsilon})  = O(\log\frac{N}{k} )
\]
times of binary search on the optimal radius $r^*$ by calling \algoptfeasi, 
the search range of $r^*$ or the gap between $r_{max}$ and $r_{min}$ will 
be reduced to within $\varepsilon$. So, it takes expected $O(N^2 \log (N/k))$ 
time in total to get an approximate solution with radius at most $\varepsilon$ 
more than $size(k, S_O)$ or $size(k, \partial P)$.
\begin{theorem}\label{t:opgtsseg}
    Under the rule of continuous coverage, \opgt for a simple 
		polygon can be approximated to $(1 + \varepsilon)$-optimal in expected 
		$O(N^2\log N)$ time, and $O(({N^2}/{k} )\log(N/k))$ in most cases,
    where $N=\lceil{len(\partial P)}/({2\varepsilon})\rceil$.
\end{theorem}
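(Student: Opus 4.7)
The plan is to establish the theorem by combining three ingredients: (i) the discretization bound relating $size(k,\partial P)$ and $size(k,S_O)$ which was already derived at the start of Section~\ref{sec:algo}; (ii) correctness of the feasibility check \algoptfeasi in Algorithm~\ref{algo:cont}; and (iii) a binary search on the candidate radius. First, I would recall that by construction of $S_O$ the inequality $size(k,S_O)\le size(k,\partial P)\le size(k,S_O)+\varepsilon$ holds, so any radius $r$ that covers $S_O$ with $k$ discs under the continuous-chain rule yields a radius at most $r+\varepsilon$ sufficient to cover $\partial P$; hence it suffices to approximate $size(k,S_O)$ to within additive $\varepsilon$.

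Next, I would argue correctness of Algorithm~\ref{algo:cont}. The key structural fact to prove is a monotonicity/exchange lemma: for a fixed radius $r$, let $M[i]$ be the largest integer $\ell$ such that the minimum enclosing disc of $o_i,\dots,o_{i+\ell-1}$ has radius at most $r$. Then $k$ discs of radius $r$ cover $S_O$ under the single-chain rule if and only if, starting from some index $i$, the greedy jumps $i\to i+M[i]\to i+M[i]+M[i+M[i]]\to\cdots$ wrap around the cycle within $k$ steps. The ``if'' direction is immediate; for the ``only if'' I would use a standard exchange argument: given any feasible solution whose first chain starts at $i$ and covers $o_i,\dots,o_{j}$, we may extend the chain greedily to $o_{i+M[i]-1}$ without breaking feasibility for the remaining points, because lengthening one chain only shortens the suffix that the remaining $k-1$ discs must cover. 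Iterating this exchange produces exactly the greedy schedule of Phase~2. The restriction in Phase~2 of starting points to $i\in\{1,\dots,M[1]+1\}$ is justified because $o_1$ must lie in some chain, and that chain's starting point cannot be more than $M[1]$ positions before $o_1$ along the cycle.

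Then I would analyze running time. Phase~1 is a sliding-window over a cyclic sequence: the pointer $end$ advances monotonically, and each call to \algomed (the expected linear-time smallest-enclosing-disc routine of Welzl) is made on the current window. Since $end$ advances at most $2N$ times in total across the outer loop, and each \algomed invocation runs in expected time proportional to the window size, a careful amortized analysis — or alternatively the Megiddo/Welzl randomized incremental framework combined with deletion at the window's tail — gives expected $O(N^2)$ in the worst case and $O(N^2/k)$ when chain lengths are $\Theta(N/k)$. Phase~2 traverses a total of $O(N)$ indices per starting point, and with at most $M[1]+1=O(N/k)$ starting points in the typical case one obtains the stated $O(N^2/k)$ bound.

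Finally, I would close the binary search argument. The optimum $r^\ast$ lies in $[0,\,len(\partial P)/(2k)]$, so reducing the interval length below $\varepsilon$ requires $\lceil\log_2(len(\partial P)/(2k\varepsilon))\rceil=O(\log(N/k))$ iterations of \algoptfeasi, yielding total expected time $O(N^2\log N)$ in the worst case and $O((N^2/k)\log(N/k))$ in the typical case. Combining with the discretization bound, the returned radius is at most $size(k,\partial P)+\varepsilon$, which for $\varepsilon$ chosen proportional to $size(k,\partial P)$ gives a $(1+\varepsilon)$-approximation, hence an FPTAS. The main obstacle I anticipate is the exchange argument for correctness of Phase~2 under the single-chain constraint on a \emph{cyclic} sequence — the wraparound case forces one to fix a ``cut'' and enumerate only $O(M[1])$ starting indices rather than all $N$, and verifying that this enumeration is complete requires the cyclic version of the exchange lemma outlined above.
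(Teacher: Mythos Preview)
Your proposal is correct and follows essentially the same approach as the paper: discretize $\partial P$ into $S_O$, use the sliding-window computation of $M[i]$ via \algomed followed by greedy tiling (Algorithm~\ref{algo:cont}) as the feasibility oracle, and binary-search on $r$ over $[0,\,len(\partial P)/(2k)]$ for $O(\log(N/k))$ rounds. The paper states the algorithm and its complexity but does not spell out the exchange/greedy-optimality argument for Phase~2 that you supply; one small slip in your write-up is that the $O(N^2/k)$ term is driven by Phase~1 (the $O(N)$ \algomed calls on windows of typical size $O(N/k)$), whereas Phase~2 costs only $O(k)$ per start and hence $O(N)$ total with the $M[1]+1$ starting-point optimization.
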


\remark{In the running time complexity analysis, we implicitly 
used the assumption that $len(\partial P)$ is polynomial to problem input 
size (see Section~\ref{sec:problem}). Also, the algorithm given above
computes an $OPT + \varepsilon$ optimal solution. However, 
it can be naturally assumed that the optimal
sensing radius $OPT$ is lower bounded in realistic scenarios. 
So, an $(OPT + \varepsilon)$ solution
directly translates into a $(1 + \varepsilon)$-optimal solution. 
Lastly, using techniques similar to those from \cite{FenHanGaoYuRSS19,FenYu2020RAL}, 
we mention that results in this subsection readily extends to multiple 
simple polygons with gaps along the boundary. 
These arguments continue to apply throughout the rest of  
this section. 

Regarding the choice in implementation, the minimum enclosing disc problem 
(1-center problem) also has deterministic solution 
\cite{megiddo1983linear} in linear time, but a randomized algorithm is considered to be 
more efficient\cite{welzl1991smallest} and easier to implement.}

\subsection{$(2 + \varepsilon)$ Approximation}
In dealing with Euclidean $k$-clustering problems, two seminal methods 
are often brought out, both of which compute $2$-approximation solutions
for $k$-center problem in polynomial time. This is fairly close to the 
inapproximability gap of $1.822$ for Euclidean $k$-center
problem\cite{feder1988optimal}. The first 
\cite{hochbaum1985best, vazirani2013approximation} transforms the 
clustering problem to a dominating set problem and then applies parametric 
search on the cluster size (radius), resulting in a $2$-approximation in 
time $O(n^2\log n)$ with $n$ being the number of points to cover. 
A second method \cite{gonzalez1985clustering} takes 
a simpler farthest clustering approach by iteratively choosing the 
furthest point from the current centers as the new center. The method runs 
in $O(nk)$ but is subsequently improved to $O(n\log k)$ in 
\cite{feder1988optimal}. So, by applying either of them on $S_O$, we have

\begin{proposition}
    \osgt can be approximated to $(2 +\varepsilon)$-optimal in polynomial time 
		with $N=O({len(\partial P)}/{\varepsilon})$ samples for perimeter 
		guarding and $N=O(({len(\partial P)}/{\varepsilon})^2)$ samples for
    region guarding.
\end{proposition}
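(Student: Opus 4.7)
The plan is to reduce \osgt directly to the discrete Euclidean $k$-center problem via the sampling construction of $S_O$ already described above, then invoke one of the two classical $2$-approximation algorithms on $S_O$, and finally pay an additive $\varepsilon$ to lift a cover of $S_O$ back to a cover of $\partial P$ or $P$.

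Concretely, I would first observe that because $S_O\subset \partial P$ (or $S_O\subset P$ in the region case), any feasible cover of $\partial P$ (resp.\ $P$) is a feasible cover of $S_O$, so
\[
size(k, S_O)\ \le\ size(k, \partial P) \quad\text{(resp.\ } size(k, P)\text{)}.
\]
Next, run either the Hochbaum--Shmoys dominating-set/parametric-search algorithm or Gonzalez's farthest-point clustering on the $N$ samples; by the results cited, this returns in polynomial time a set of $k$ centers $c_1,\dots,c_k$ with common radius
\[
r\ \le\ 2\cdot size(k, S_O)\ \le\ 2\cdot size(k, \partial P).
\]
The sample set was built so that every point of $\partial P$ (resp.\ every point of $P$) lies within $\varepsilon$ of some $o_i\in S_O$. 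Therefore enlarging every disc $B(c_i, r)$ to $B(c_i, r+\varepsilon)$ yields a cover of the entire critical set, with radius at most $2\cdot size(k,\partial P) + \varepsilon$. Under the same lower-bounded-$\opt$ assumption stated in the remark after Theorem~\ref{t:opgtsseg}, rescaling $\varepsilon$ by a constant converts this $\opt\!+\!\varepsilon$-type bound into a genuine $(2+\varepsilon)$-multiplicative bound.

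For the sample complexity, the discretization from the preamble to this subsection gives $N = \lceil len(\partial P)/(2\varepsilon)\rceil = O(len(\partial P)/\varepsilon)$ for perimeter guarding and $N=O((len(\partial P)/\varepsilon)^2)$ for region guarding, matching the statement. Since the $k$-center approximations run in $O(N^2 \log N)$ or $O(N\log k)$ time, and under the assumption (from Section~\ref{sec:problem}) that $len(\partial P)$ is polynomial in the input size, the whole procedure is polynomial in the problem input and in $1/\varepsilon$.

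There is no substantial obstacle here; the proof is essentially the composition of a discretization argument with a black-box invocation of existing $2$-approximations. The only delicate point is the handoff between the additive $\varepsilon$ incurred by sampling and the multiplicative $2$ incurred by the $k$-center approximation, which is where the lower-bound-on-$\opt$ assumption from the remark is used to rename $\opt+\varepsilon$ as $(2+\varepsilon)\opt$ after replacing $\varepsilon$ by a suitable $\varepsilon'$.
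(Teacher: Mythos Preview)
Your proposal is correct and follows exactly the approach the paper takes: discretize to $S_O$, invoke one of the two cited $2$-approximations for Euclidean $k$-center as a black box, and absorb the additive $\varepsilon$ from sampling into the multiplicative factor via the lower-bounded-\opt remark. The only nitpick is that in the region case the grid-cell centers forming $S_O$ need not literally lie inside $P$, so the inequality $size(k,S_O)\le size(k,P)$ is only guaranteed up to an additive $O(\varepsilon)$; this is harmless for the final $(2+\varepsilon)$ bound and the paper itself glosses over it in the same way.
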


For evaluation, we implemented the farthest clustering approach 
\cite{gonzalez1985clustering}.

\subsection{Grid and Integer Programming-based Algorithm}
Approximation using grids \cite{har2011geometric} often exhibits good 
optimality guarantees and bounded time complexity. Seeing that and knowing
that \osgt is hard in general, we attempted grid-based integer linear programming 
(ILP) methods for solving \osgt with good success. Our ILP model 
construction is done as follows. 


Consider bounding the polygon $P$ of interest by an $m\times n$ square grid 
where each cell is $\varepsilon \times \varepsilon$, and denote $g_{ij}$ as the center 
of the cell at row $i$ and column $j$. If we limit the possible locations of 
each robot to the center of some grid cell, the optimal radius with this 
limitation will only be at most $\sqrt{2}\,\varepsilon / 2$ away from 
$size(k, S_O)$. This could be seen by moving the robot locations in the optimal deployment
to their nearest grid centers respectively and applying triangle inequality.

So, given a candidate radius $r$, to check the feasibility of whether 
$\partial P$ can be covered by $k$ circles of radius $r$, we adapt an 
approach for solving the $k$-center problem\cite{daskin2000new} with 
integer linear programming. Specifically, we create $m\times n$ boolean 
variables $y_{ij}$, $1\leq i\leq m$, $1\leq j \leq n$, indicating whether 
there is a robot at $g_{ij}$, then start to check the feasibility of 
following integer programming model.
\begin{align}
    \sum_{ 1\leq i\leq m }\,\sum_{1\leq j \leq n} y_{ij} \leq k\qquad \qquad \qquad \\
    \sum_{i,\ j\ s.t.\ \|g_{ij} - o_\ell\|_2\ \leq\ r} y_{ij} \geq 1 \text{ for each }\, 1\leq \ell \leq N\\
    y_{ij} \in \{0,1\}\ \qquad 1\leq i\leq m,\ 1\leq j \leq n
\end{align}
The first constraint says the number of locations is no more than $k$, and 
the second ensures each $o_\ell$ can be covered by at least one circle 
with radius $r$ illustrated in Fig.~\ref{fig:ilpexample}.

\begin{figure}[!ht]
    \centering
		\vspace*{1mm}
		\begin{overpic}[width=1\columnwidth]{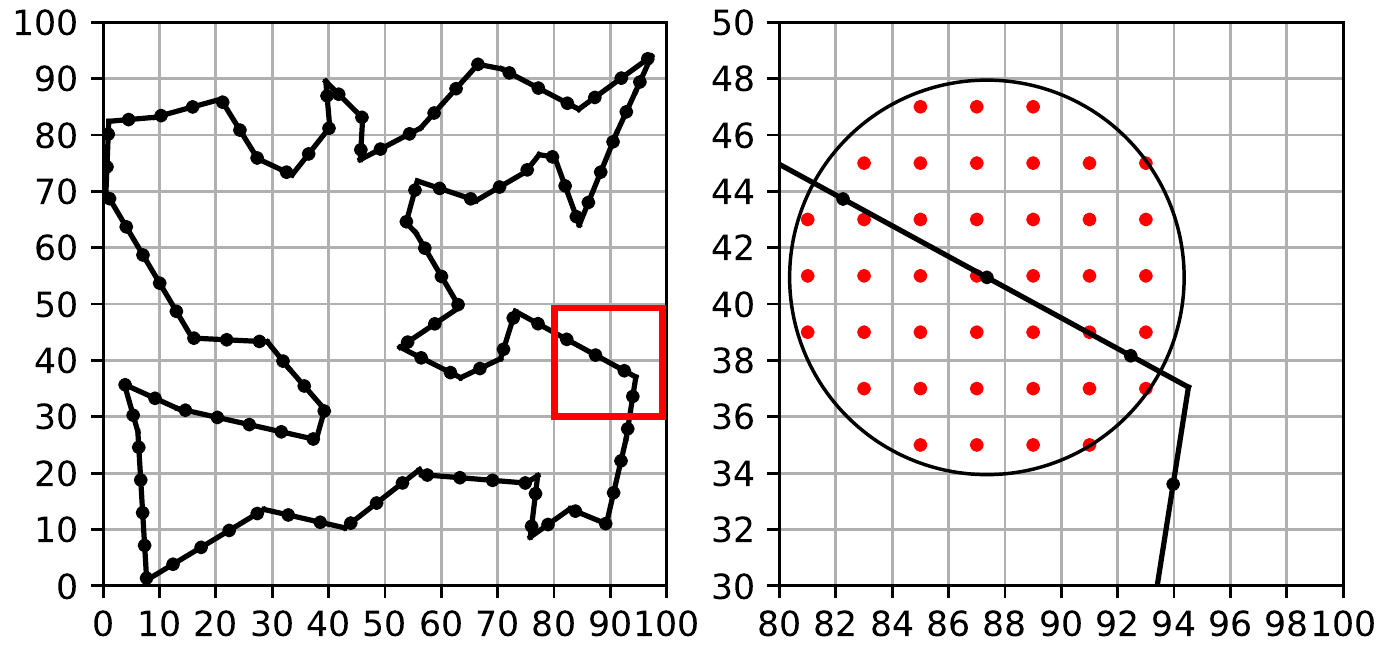}
        \linethickness{0.4mm}
        \put(40.2,25.3){\color{red}\vector(0.8,1.05){15.6}}
        \put(40.2,17){\color{red}\vector(1.45,-1.1){15.5}}
        \put(40,19){\color{blue}$o_{67}$}
        \put(69,24){\color{blue}$o_{67}$}
    \end{overpic}
		\vspace*{1mm}
    \caption{This perimeter guarding example illustrates constraint 
		($2$) for $o_{67}$ with $r=7$. The black dots are the sampled $S_O = 
		\{o_1, \dots, o_{100}\}$. In order to cover $o_{67}$, at least one 
		among the red color grid cell centers need to be selected as robot location.}
    \label{fig:ilpexample}
\end{figure}

When the ILP model has a feasible solution, $r^* = size(k, S_O) \leq r$ 
and $r\leq r^* = size(k, S_O) $ otherwise. This means that we can do a binary 
search on $r^*$, from an initial range of $r^* = size(k, S_O)$: $r_{min}^i=0 
<  r^* \leq {len(\partial P)}/({2k})=r_{max}^i$, until finally $r^f_{max} - 
r^f_{min}$ is reduced to the selected granularity of $\varepsilon$.

\begin{remark}
    With minor modifications, the ILP model applies to 2D region 
		guarding, where the number of constraint $(2)$ will then be $O(mn)$ 
		with one for each grid that intersects with the polygon in an $m\times n$ 
		grid. The initial upper bound set as $r^*$ be $len(\partial P)$ and 
		lower bound set as $\sqrt{{area(P)}/({k\pi})}$. It is also possible to 
		apply the $(2+\varepsilon)$-approximation algorithm and set the result 
		as the initial upper bound with the half of it as the initial lower 
		bound.
\end{remark}

\section{Evaluation and Application Scenarios}\label{sec:expr}
For the three algorithms described in Section~\ref{sec:algo}, we developed
implementations in C++ and evaluated them on an Intel Core i7 PC with a 
boost clock of 4.2GHz and 16GB RAM. For solving ILP models, 
Gurobi solver \cite{optimization2019gurobi} is used. 
To evaluate the algorithms, we first generate a set of performance 
benchmarks obtained by subjecting these algorithms through a large set 
of benchmark cases. 
Following the synthetic benchmarks, we applied the algorithms on two 
potential application scenarios: guarding the outer perimeter of the 
Warwick Castle and monitoring a building for potential fire eruption 
points.
 
\subsection{Performance Benchmarks}
For creating synthetic benchmarks, to generate the test set $\W$, we 
created simple polygons with the number of vertices ranging between 
$10$ and $200$. 
For each instance of the tested polygon, vertices are picked uniform at 
random from $[0,1]\times[0,1]$ and the TSP tour among these vertices are 
used for generating a simple polygon of a reasonable shape.
An example is given in Fig.~\ref{fig:ilpexample}.

\def\opgtc{\textsc{Al\_OPG\_2D\_Cont}\xspace}
\def\opgtilp{\textsc{Al\_OPG\_2D\_ILP}\xspace}
\def\orgtilp{\textsc{Al\_ORG\_2D\_ILP}\xspace}
We first evaluate the computational performance of the special \opgt
algorithm where each sensor may cover a single continuous perimeter
segment; denote this algorithm as \opgtc.
Table~\ref{tab:opgcont} lists the running time in seconds for 
various $N$ (number of discretized samples) and $k$ (number of guards). 
Various values of $N$ suggest the choices of $\varepsilon$ according to 
the setup of $N$ in Section~\ref{sec:algo}, in this case 
$N=\lceil {len(\partial P)}/{(2\varepsilon)} \rceil$.
Each data point is an average of $100$ examples. As we can observe, the 
method has very good scalability. It also demonstrates the behavior that 
running time is inverse proportional to the number of guards, 
conforming with the statement about time complexity in Section~\ref{subsec:singleseg}. 
The normalized 
average standard deviation is about $0.06$, which is pretty small. 

\begin{table}[!htbp]
    \centering
    \begin{small}
        \begin{tabularx}{0.49\textwidth}{|X|c|c|c|c|c|c|} 
        \hline
        \diagbox{\hspace{1.4mm}$N$\hspace{0.3mm}}{$k$\hspace{0.7mm}} & $5$ & $10$ & $20$ & $30$ & $50$ & $100$ \\
        \hline
        \hspace{5mm}500&0.097 &        0.044 &        0.019 &        0.013 &        0.007 &        0.004 \\
        \hline
        \hspace{5mm}800&0.257 &       0.118 &       0.054 &        0.036 &        0.019 &        0.011 \\
        \hline
        \hspace{4mm}1000&0.385 &       0.183 &       0.082 &        0.055 &        0.029 &        0.016 \\
        \hline
        \hspace{4mm}1500&0.912 &       0.436 &       0.203 &       0.120 &       0.073 &        0.039 \\
        \hline
        \hspace{4mm}2000&1.597 &      0.743 &       0.345 &       0.225 &       0.123 &       0.062 \\
        \hline        
        \end{tabularx}
    \end{small}
    \vspace{0.1in}
    \caption{
        Running time (seconds) for \opgtc.
    }
    \label{tab:opgcont}
\vspace*{-1mm}
\end{table}

Since the $(2 + \varepsilon)$-optimal algorithm is extremely efficient, 
we do not report its running time. For the ILP methods, Table~\ref{tab:opgilp}
and Table~\ref{tab:orgilp} provide the running times for solving \opgt
\begin{table}[htbp]
    \centering
    \small{
        \begin{tabularx}{0.49\textwidth}{|X|c|c|c|c|c|c|} 
        \hline
        \diagbox{$GS$}{$k$} & $10$ & $15$ & $20$ & $30$ & $50$ & $100$ \\
        \hline
        \hspace{2.2mm}$50\times 50$   &0.219  &0.127  &0.092  &0.051  &0.023  &0.009\\
        \hline
        \hspace{1mm}$100\times 100$ &0.686  &0.383  &0.250  &0.141  &0.089  &0.033 \\ 
        \hline
        \hspace{1mm}$200\times 200$ &1.915         &1.132         &0.792  &0.444  &0.281  &0.115  \\
        \hline
        \hspace{1mm}$300\times 300$ &7.782         &4.201         &2.613         &1.513         &0.814  &0.435 \\
        \hline
        \hspace{1mm}$400\times 400$ &21.23        &11.63        &7.275         &3.827         &2.231         &1.318 \\        \hline
    \end{tabularx}
    }
    \vspace{0.1in}
    \caption{
        Running time (seconds) for \opgtilp.
    }
    \label{tab:opgilp}
\end{table}
and \orgt, respectively (for convenience, denote these two methods as
\opgtilp and \orgtilp).
Each data point is an average over $10$ cases. 
$GS$ denotes the discrete grid size, suggesting the choice of the grid granularity $\varepsilon$
and the single grid cell size $\varepsilon \times \varepsilon$.
We observe that the ILP method is 
highly effective for solving \opgt and fairly good for solving \orgt. 
The normalized average standard deviation is about $0.125$ for \opgtilp
(which is reasonable) and $0.545$ for \orgtilp (which is relatively large). 

\begin{table}[htbp]
    \centering
    \small{
        \begin{tabularx}{0.49\textwidth}{|X|c|c|c|c|c|c|c|} 
        \hline
        \diagbox{$GS$}{$k$} &$10$ & $15$ & $20$ & $30$ & $50$ & $100$ \\
        \hline
        \hspace{2mm}$20\times 20$ &0.252  &0.245  &0.200  &0.170  &0.136  &0.094 \\\hline
        \hspace{2mm}$30\times 30$&1.413 &1.064 &0.886  &0.799  &0.858  &0.576 \\\hline
        \hspace{2mm}$40\times 40$&5.048 &3.598 &3.055 &2.252 &6.114 &1.156 \\\hline
        \hspace{2mm}$50\times 50$ &7.003 &5.617 &4.984 &5.836 &10.91 &0.925\\\hline
        \hspace{2mm}$80\times 80$ &87.14  &84.18   &82.09   &423.5 & $>$2e3 & $>$2e3 \\\hline
        \end{tabularx}
    }
    \vspace{0.1in}
    \caption{
        Running time (seconds) for \orgtilp.
    }
    \label{tab:orgilp}
\end{table}

For solution quality, we compare \opgtc, \opgtilp,
and \orgtilp with the $(2 + \varepsilon)$-optimal solution. For example,
given a test case, let the resulting radius for \opgtc be $r_1$ 
and that for
the $(2+\varepsilon)$-optimal algorithm be $r_2$, we compute the optimality
gain as the reduce of coverage radius over $r_2$ in percentage, 
that is $(r_2 - r_1)/r_2 \cdot 100$. These are then averaged over $10$ cases.
Selected representative results (only three out of a total of 18 rows) are 
given in Table~\ref{tab:comp}. 
In the table, m denotes the method where $\mathbf{1} = $ \opgtc, $\mathbf{2} =$ \opgtilp, 
and $\mathbf{3} = $\orgtilp. Number of samples for \opgtc is set to $2000$. Grid
size for \opgtilp is $200\times 200$. Grid size for \orgtilp is set 
to $40 \times 40$. For each method, we used polygons with $200$ vertices. 
We observe that these algorithms do significantly better than $2$-optimal 
with \opgtilp getting very close to being $1$-optimal (whose optimality gain is
no more than around $50$).

\begin{table}[!htbp]
    \centering
    \small{
        \begin{tabularx}{0.485\textwidth}{|c|c|c|c|c|c|c|} 
        \hline
        \diagbox{\,m}{$k$} & $5$ & $10$ & $20$ & $30$ & $50$ & $100$ \\
        \hline
        $\mathbf{1}$ &\,22.34  &\,23.89  &\,27.07  &\,29.14  &\,32.32  &34.18  \\\hline
        $\mathbf{2}$ &\,36.29  &\,34.82  &\,36.22  &\,36.98  &\,37.69  &38.29  \\\hline
        $\mathbf{3}$ &\,35.69 &\,32.58 &\,30.06 &\,25.22 &\,21.99 &15.46\\\hline
        \end{tabularx}
    }
    \vspace{0.1in}
    \caption{Optimality gain of \opgtc, \opgtilp,
and \orgtilp over the $(2 + \varepsilon)$-optimal method.}
    \label{tab:comp}
\end{table}

\subsection{Two Application Scenarios}
Next, we demonstrate the solutions computed by our algorithms on two 
potential application scenarios. For the first one, we apply algorithms
for \opgt on the outer boundary of the Warwick Castle in England (data
retrieved from openstreetmap.org\cite{haklay2008openstreetmap}). Fig.~\ref{fig:wc} shows the 
solution for $15$ guards computed by the $(2 + \varepsilon)$-optimal 
algorithm, \opgtc, and \opgtilp, respectively. Both \opgtc and \opgtilp 
do about $40\%$ better when compared with the $(2 + \varepsilon)$-optimal 
algorithm. \opgtc does $3\%$ better than \opgtilp since the perimeter is 
suitable for continuous guarding while the ILP method is slightly limited
by the chosen resolution.
\begin{figure}[ht]
    \centering
		\small{
	  \begin{overpic}[width=\columnwidth]{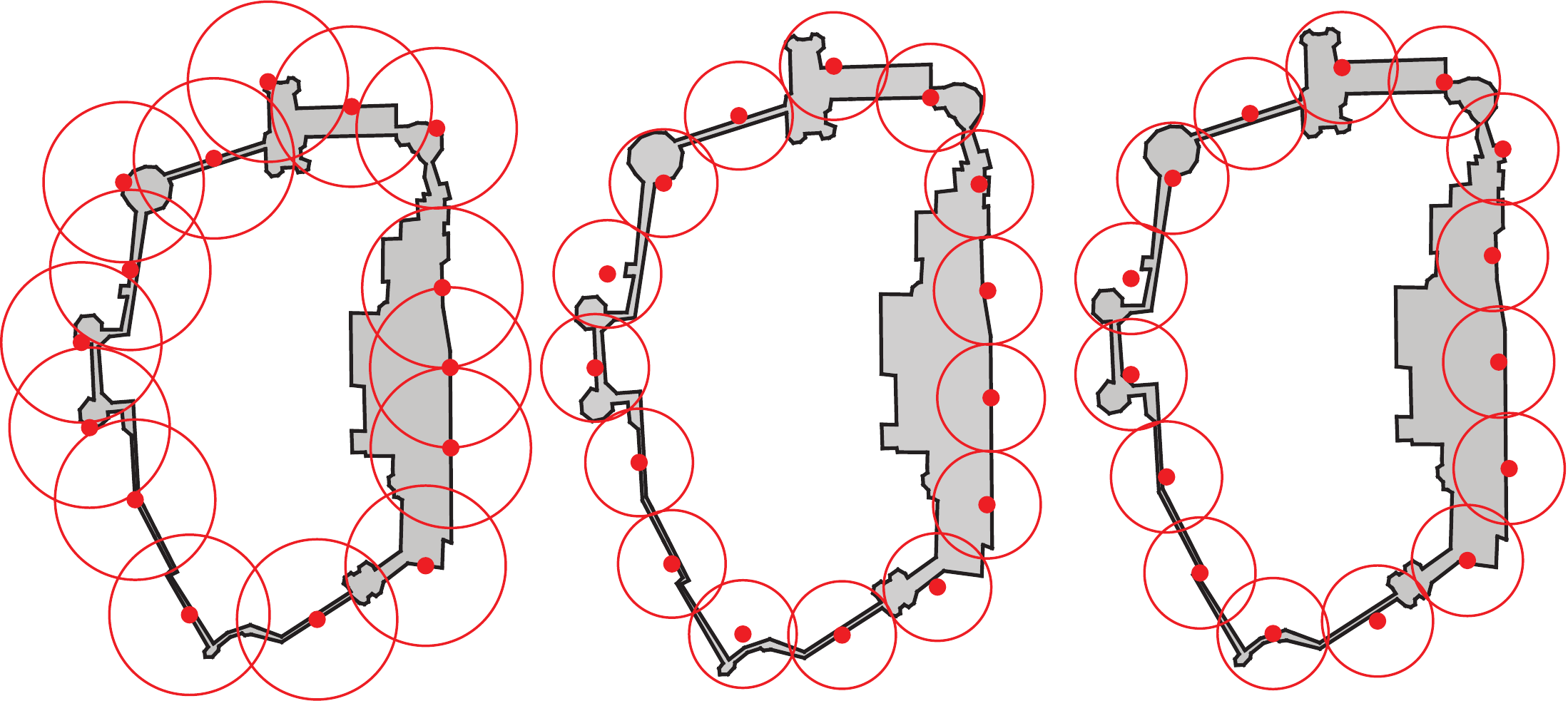}
        \put(17,-3){(a)}
        \put(50,-3){(b)}
        \put(83,-3){(c)}
    \end{overpic}
		}
		\vspace*{1mm}
    \caption{Solutions for deploying $15$ mobile sensors to guard
		the perimeter of the Warwick Castle. Methods: (a) 
		$(2+\varepsilon)$-optimal. (b) \opgtc. (c) \opgtilp. }
    \label{fig:wc}
\end{figure}

In a second application, we took the footprint of the Brazil 
National Museum and use $40$ mobile robots to monitor it. The solution,
shown in Fig.~\ref{fig:museum}, is computed using \orgtilp. This could be
useful when a building is on fire and drones equipped with heat sensors 
can monitoring ``hot spots'' on top of the building to prioritize fire 
extinguishing effort. There are also many other similar application 
scenarios. 

\begin{figure}[ht]
    \centering
    \includegraphics[width=0.95\columnwidth]{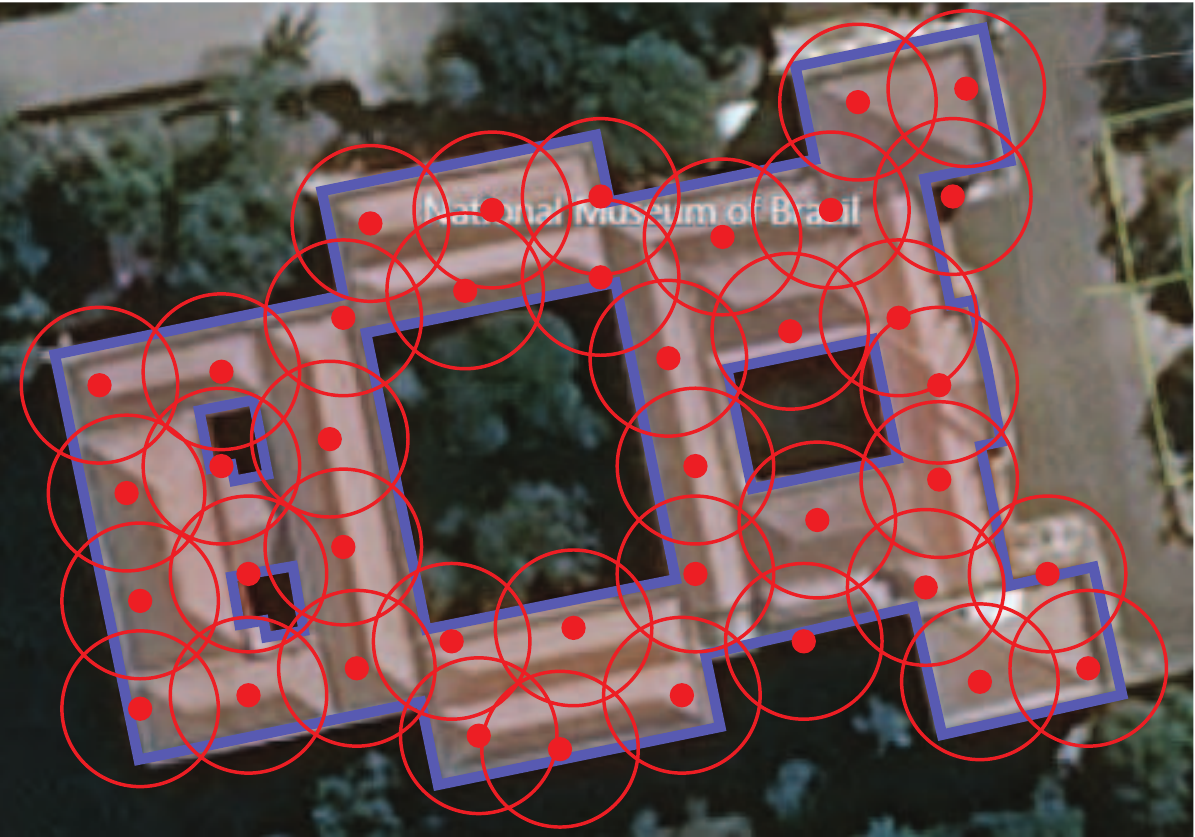}
		\vspace*{1mm}
    \caption{A near-optimal solution for deploying $40$ mobile robots for 
		monitoring the Brazil National Museum, which caught fire in $2019$.}
    \label{fig:museum}
\end{figure}

\section{Conclusion and Discussions}\label{sec:conc}
In this study, we examine \osgt, the problem of directly computing a deployment 
strategy for covering 1D or 2D critical sets using many mobile sensors while 
minimizing the maximum sensing radius. After showing that \osgt is computationally 
intractable to even approximate within $1.152$, we describe several 
algorithmic solutions with optimality and/or computation time guarantees. 
Subsequent thorough evaluation demonstrates the effectiveness of these algorithmic 
solutions. Finally, we demonstrate the utility of our algorithmic solutions with 
two application scenarios. 
Due to space limit, guarding perimeters with gaps (see, e.g.,~\cite{FenHanGaoYuRSS19}) 
is not discussed in this work. However, because our algorithms work with a 
grid-based discretization, the results directly apply to arbitrary bounded 1D and 2D 
sets. 

Many intriguing questions follow; we mention two here concerning the sensing 
capabilities. First, \osgt works with circular regions which is perhaps the simplest one
due to symmetry. What if the sensor region is not circular? Whereas such cases 
appear to be hard \cite{culberson1988covering}, effective scalable solutions may 
still be possible. Secondly, currently we assume that all parts of the critical 
set to be guarded have equal importance. What if certain subsets are more important?

\bibliographystyle{formatting/IEEEtran}
\bibliography{bib/bib,bib/bib2}
\end{document}